\setlist[enumerate]{noitemsep, topsep=0.5\topsep}
\setlist[description]{noitemsep, topsep=0.5\topsep}
\setlist[itemize]{noitemsep, topsep=0.5\topsep}
\theoremstyle{definition}
\newtheorem{theorem}{Theorem}
\newtheorem{proposition}[theorem]{Proposition}
\newtheorem{lemma}[theorem]{Lemma}
\newtheorem{corollary}[theorem]{Corollary}
\newtheorem{example}[theorem]{Example}
\newtheorem{remark}[theorem]{Remark}
\definecolor{green}{rgb}{0.12, 0.66, 0.0}
\definecolor{blue}{rgb}{0.0, 0.25, 0.65} 
\newcommand{\xhdr}[1]{{\noindent\bfseries #1}.}
\newcommand{\new}[1]{\emph{#1}}
\newcommand{\bbR}{\ensuremath{\mathbb{R}}}
\newcommand{\RR}{\mathbb{R}}
\newcommand{\ZZ}{\mathbb{Z}}
\newcommand{\NN}{\mathbb{N}}
\renewcommand{\vec}[1]{\mathbf{#1}} 
\newcommand{\oms}{\{\!\!\{}
\newcommand{\cms}{\}\!\!\}}
\newcommand{\sgn}{\operatorname{sgn}}
\begin{document}

\title{Weisfeiler and Leman Go Neural: Higher-order Graph Neural Networks}
\author{
	Christopher Morris\textsuperscript{1},
	Martin Ritzert\textsuperscript{2},
	Matthias Fey\textsuperscript{1},
	William L. Hamilton\textsuperscript{3},\\\bf \Large
	Jan Eric Lenssen\textsuperscript{1},
	Gaurav Rattan\textsuperscript{2},
	Martin Grohe\textsuperscript{2}\\
	\textsuperscript{1}{TU Dortmund University}\\
	\textsuperscript{2}{RWTH Aachen University}\\
	\textsuperscript{3}{McGill University and MILA}\\\
	\{christopher.morris, matthias.fey, janeric.lenssen\}@tu-dortmund.de,\\
	\{ritzert, rattan, grohe\}@informatik.rwth-aachen.de,\\
	wlh@cs.mcgill.ca
}

\maketitle 

\begin{abstract}
	In recent years, graph neural networks (GNNs) have emerged as a powerful neural architecture to learn vector representations of nodes and graphs in a supervised, end-to-end fashion. Up to now, GNNs have only been evaluated empirically---showing promising results. The following work investigates GNNs from a theoretical point of view and relates them to the $1$-dimensional Weisfeiler-Leman graph isomorphism heuristic ($1$-WL). We show that GNNs have the same expressiveness as the $1$-WL in terms of distinguishing non-isomorphic (sub-)graphs. Hence, both algorithms also have the same shortcomings. Based on this, we propose a generalization of GNNs, so-called $k$-dimensional GNNs ($k$-GNNs), which can take higher-order graph structures at multiple scales into account. These higher-order structures play an essential role in the characterization of social networks and molecule graphs. Our experimental evaluation confirms our theoretical findings as well as confirms that higher-order information is useful in the task of graph classification and regression.
\end{abstract}

\section{Introduction}
Graph-structured data is ubiquitous across application domains ranging from chemo- and bioinformatics to image and social network analysis.
To develop successful machine learning models in these domains, we need techniques that can exploit the rich information inherent in graph structure, as well as the feature information contained within a graph's nodes and edges. In recent years, numerous approaches have been proposed for machine learning graphs---most notably, approaches based on graph kernels \cite{Vis+2010} or, alternatively, using graph neural network algorithms \cite{Ham+2017a}.

Kernel approaches typically fix a set of features in advance---e.g., indicator features over subgraph structures or features of local node neighborhoods.
For example, one of the most successful kernel approaches, the \new{Weisfeiler-Lehman subtree kernel}~\cite{She+2011}, which is based on the $1$-dimensional Weisfeiler-Leman graph isomorphism heuristic \cite[pp.\,79\,ff.]{Gro2017}, generates node features through an iterative relabeling, or \emph{coloring}, scheme: 
First, all nodes are assigned a common initial color; the algorithm then iteratively recolors a node by aggregating over the multiset of colors in its neighborhood, and the final feature representation of a graph is the histogram of the resulting node colors.  
By iteratively aggregating over local node neighborhoods in this way, the WL subtree kernel is able to effectively summarize the neighborhood substructures present in a graph. 
However, while powerful, the WL subtree kernel---like other kernel methods---is limited because this feature construction scheme is fixed (i.e., it does not adapt to the given data distribution). Moreover, this approach---like the majority of kernel methods---focuses only on the graph structure and cannot interpret continuous node and edge labels, such as real-valued vectors which play an important role in applications such as bio- and chemoinformatics. 
 
Graph neural networks (GNNs) have emerged as a machine learning framework addressing the above challenges.
Standard GNNs can be viewed as a neural version of the $1$-WL algorithm, where colors are replaced by continuous feature vectors and neural networks are used to aggregate over node neighborhoods \cite{Ham+2017,Kip+2017}. 
In effect, the GNN framework can be viewed as implementing a continuous form of graph-based ``message passing'', where local neighborhood information is aggregated and passed on to the neighbors~\cite{Gil+2017}. By deploying a trainable neural network to aggregate information in local node neighborhoods, GNNs can be trained in an end-to-end fashion together with the parameters of the classification or regression algorithm, possibly allowing for greater adaptability and better generalization 
compared to the kernel counterpart of the classical $1$-WL algorithm. 

Up to now, the evaluation and analysis of GNNs has been largely empirical, showing promising results compared to kernel approaches, see, e.g.,~\cite{Yin+2018}. However, it remains unclear how GNNs are actually encoding graph structure information into their vector representations, and whether there are theoretical advantages of GNNs compared to kernel based approaches. 

\xhdr{Present Work}
We offer a theoretical exploration of the relationship between GNNs and kernels that are based on the $1$-WL algorithm. 
We show that GNNs cannot be more powerful than the $1$-WL in terms of distinguishing non-isomorphic (sub-)graphs, e.g., the properties of subgraphs around each node. 
This result holds for a broad class of GNN architectures and all possible choices of parameters for them.
On the positive side, we show that given the right parameter initialization GNNs have the same expressiveness as the $1$-WL algorithm, completing the equivalence. 
Since the power of the $1$-WL has been completely characterized, see, e.g.,~\cite{Arv+2015,kiefer2015graphs}, we can transfer these results to the case of GNNs, showing that both approaches have the same shortcomings.

Going further, we leverage these theoretical relationships to propose a generalization of GNNs, called $k$-GNNs, which are neural architectures based on the $k$-dimensional WL algorithm ($k$-WL), which are strictly more powerful than GNNs. 
The key insight in these higher-dimensional variants is that they perform message passing directly between subgraph structures, rather than individual nodes.
This higher-order form of message passing can capture structural information that is not visible at the node-level.

Graph kernels based on the $k$-WL have been proposed in the past \cite{Mor+2017}.
However, a key advantage of implementing higher-order message passing in GNNs---which we demonstrate here---is that we can design hierarchical variants of $k$-GNNs, which combine graph representations learned at different granularities in an end-to-end trainable framework. 
Concretely, in the presented hierarchical approach the initial messages in a $k$-GNN are based on the output of lower-dimensional $k'$-GNN (with $k' < k$), which allows the model to effectively capture graph structures of varying granularity.  Many real-world graphs inherit a hierarchical structure---e.g., in a social network we must model both the ego-networks around individual nodes, as well as the coarse-grained relationships between entire communities, see, e.g.,~\cite{New2003}---and our experimental results demonstrate that these hierarchical $k$-GNNs are able to consistently outperform traditional GNNs on a variety of graph classification and regression tasks. Across twelve graph regression tasks from the QM9 benchmark, we find that our hierarchical model reduces the mean absolute error by 54.45\% on average. For graph classification, we find that our hierarchical models leads to slight performance gains.

\xhdr{Key Contributions}
Our key contributions are summarized as follows:
\begin{enumerate}
	\item We show that GNNs are not more powerful than the $1$-WL in terms of distinguishing non-isomorphic (sub-)graphs. Moreover, we show that, assuming a suitable parameter initialization, GNNs have the same power as the $1$-WL.
	\item We propose $k$-GNNs, which are strictly more powerful than GNNs. Moreover, we propose a hierarchical version of $k$-GNNs, so-called $1$-$k$-GNNs, which are able to work with the fine- and coarse-grained structures of a given graph, and relationships between those. 
	\item Our theoretical findings are backed-up by an experimental study, showing that higher-order graph properties are important for successful graph classification and regression.
\end{enumerate}

\section{Related Work}
Our study builds upon a wealth of work at the intersection of supervised learning on graphs, kernel methods, and graph neural networks. 

Historically, kernel methods---which implicitly or explicitly map graphs to elements of a Hilbert space---have been the dominant approach for supervised learning on graphs. 
Important early work in this area includes random-walk based kernels \cite{Gae+2003,Kas+2003}) and kernels based on shortest paths \cite{Borgwardt2005}.
More recently, developments in graph kernels have emphasized scalability, focusing on techniques that bypass expensive Gram matrix computations by using explicit feature maps.
Prominent examples of this trend include kernels based on graphlet counting~\cite{She+2009}, and, most notably, the Weisfeiler-Lehman subtree kernel~\cite{She+2011} as well as its higher-order variants~\cite{Mor+2017}. 
Graphlet and Weisfeiler-Leman kernels have been successfully employed within frameworks for smoothed and deep graph kernels~\cite{Yan+2015,Yan+2015a}. Recent works focus on assignment-based approaches~\cite{Kri+2016,Nik+2017,Joh+2015}, spectral approaches~\cite{Kon+2016}, and graph decomposition approaches~\cite{Nik+2018}.
Graph kernels were dominant in graph classification for several years, leading to new state-of-the-art results on many classification tasks. 
However, they are limited by the fact that they cannot effectively adapt their feature representations to a given data distribution, since they generally rely on a fixed set of features. More recently, a number of approaches to graph classification based upon neural networks have been proposed. 
Most of the neural approaches fit into the graph neural network framework proposed by~\cite{Gil+2017}. Notable instances of this model include \new{Neural Fingerprints}~\cite{Duv+2015}, \emph{Gated Graph Neural Networks}~\cite{Li+2016}, \emph{GraphSAGE}~\cite{Ham+2017}, \emph{SplineCNN}~\cite{Fey+2018}, and the spectral approaches proposed in~\cite{Bru+2014,Def+2015,Kip+2017}---all of which descend from early work in~\cite{Mer+2005} and~\cite{Sca+2009}.
Recent extensions and improvements to the GNN framework include approaches to incorporate different local structures around subgraphs \cite{Xu+2018} and novel techniques for pooling node representations in order perform graph classification \cite{Zha+2018,Yin+2018}.
GNNs have achieved state-of-the-art performance on several graph classification benchmarks in recent years, see, e.g.,~\cite{Yin+2018}---as well as applications such as protein-protein interaction prediction~\cite{Fou+2017}, recommender systems~\cite{Yin+2018a}, and the analysis of quantum interactions in molecules~\cite{Sch+2017}.
A survey of recent advancements in GNN techniques can be found in \cite{Ham+2017a}.

Up to this point (and despite their empirical success) there has been very little theoretical work on GNNs---with the notable exceptions of Li et\ al.'s \cite{Li+2018a} work connecting GNNs to a special form Laplacian smoothing and Lei et al.'s\@ \cite{Lei+2017} work showing that the feature maps generated by GNNs lie in the same Hilbert space as some popular graph kernels. Moreover, Scarselli et al.\ \cite{Sca+2009a} investigates the approximation capabilities of GNNs. 

\section{Preliminaries}\label{prelim}

We start by fixing notation, and then outline the Weisfeiler-Leman algorithm and the standard graph neural network framework.  

\subsection{Notation and Background}
A \new{graph} $G$ is a pair $(V,E)$ with a finite set of \new{nodes} $V$ and a set of \new{edges} $E \subseteq \{ \{u,v\} \subseteq V \mid u \neq v \}$. We denote the set of nodes and the set of edges of $G$ by $V(G)$ and $E(G)$, respectively. For ease of notation we denote the edge $\{u,v\}$ in $E(G)$ by $(u,v)$ or $(v,u)$.
Moreover, $N(v)$ denotes the \new{neighborhood} of $v$ in $V(G)$, i.e., $N(v) = \{ u \in V(G) \mid (v, u) \in E(G) \}$. We say that two graphs $G$ and $H$ are \new{isomorphic} if there exists an edge preserving bijection $\varphi: V(G) \to V(H)$, i.e., $(u,v)$ is in $E(G)$ if and only if $(\varphi(u),\varphi(v))$ is in $E(H)$. We write $G \simeq H$ and call the equivalence classes induced by $\simeq$ \new{isomorphism types}. Let $S \subseteq V(G)$ then $G[S] = (S,E_S)$ is the \new{subgraph induced} by $S$ with $E_S = \{ (u,v) \in E(G) \mid u,v \in S \}$. A \new{node coloring} is a function $V(G) \to \Sigma$ with arbitrary codomain $\Sigma$. Then a  \new{node colored} or \new{labeled graph}  $(G,l)$ is a graph $G$ endowed with a node coloring $l \colon V(G) \to \Sigma$. We say that $l(v)$ is a \new{label} or \new{color} of $v\in V(G)$. We say that a node coloring $c$ \new{refines} a node coloring $d$, written $c \sqsubseteq d$, if $c(v) = c(w)$ implies $d(v) = d(w)$ for every $v,w$ in $V(G)$. 
Two colorings are \new{equivalent} if $c \sqsubseteq d$ and $d \sqsubseteq c$, and we write $c \equiv d$.
A \new{color class} $Q\subseteq V(G)$ of a node coloring $c$ is a maximal set of nodes with $c(v)=c(w)$ for every $v,w$ in $Q$. Moreover, let $[1\!:\!n] = \{ 1, \dotsc, n \} \subset \NN$ for $n > 1$, let $S$ be a set then the set of \new{$k$-sets} $[S]^k = \{ U \subseteq S \mid |U| = k \}$  for $k \geq 2$, which is the set of all subsets with cardinality $k$, and let $\oms \dots \cms$ denote a multiset.

\subsection{Weisfeiler-Leman Algorithm}
We now describe the \textsc{$1$-WL} algorithm for labeled graphs. Let $(G,l)$ be a labeled graph. In each iteration, $t \geq 0$, the $1$-WL computes a node coloring $c^{(t)}_l \colon V(G) \to \Sigma$,
which depends on the coloring from the previous iteration.
In iteration $0$, we set $c^{(0)}_l = l$. Now in iteration $t>0$, we set 
\begin{equation}\label{eq:wlColoring}
	c_{l}^{(t)}(v) \!=\! \textsc{hash}\Big(\!\big(c_l^{(t-1)}(v),\oms c_l^{(t-1)}(u)\!\mid\!u \in\!N(v) \!\cms \big)\! \Big)
\end{equation}
where $\textsc{hash}$ bijectively maps the above pair to a unique value in $\Sigma$, which has not been used in previous iterations. To test two graph $G$ and $H$ for isomorphism, we run the above algorithm in ``parallel'' on both graphs. Now if the two graphs have a different number of nodes colored $\sigma$ in $\Sigma$, the \textsc{$1$-WL} concludes that the graphs are not isomorphic. Moreover, if the number of colors between two iterations does not change, i.e., the cardinalities of the images of $c_l^{(t-1)}$ and $c_l^{(t)}$ are equal, the algorithm terminates. Termination is guaranteed after at most $\max \{ |V(G)|,|V(H)| \}$ iterations. It is easy to see that the algorithm is not able to distinguish all non-isomorphic graphs, e.g., see~\cite{Cai+1992}. Nonetheless, it is a powerful heuristic, which can successfully test isomorphism for a broad class of graphs~\cite{Bab+1979}.

The $k$-dimensional Weisfeiler-Leman algorithm ($k$-WL), for $k \geq 2$, is a generalization of the $1$-WL which colors tuples from $V(G)^k$ instead of nodes. That is, the algorithm computes a coloring $c^{(t)}_{l,k} \colon V(G)^k \to \Sigma$. In order to describe the algorithm, we define the $j$-th neighborhood
\begin{equation}\label{gnei}
	N_j(s) \!=\! \{ ( s_1, \dotsc, s_{j-1}, r, s_{j+1}, \dotsc, s_k) \mid r \in V(G) \}
\end{equation}
of a $k$-tuple $s = (s_1, \dotsc, s_k )$ in $V(G)^k$. That is, the $j$-th neighborhood $N_j(t)$ of $s$ is obtained by replacing the $j$-th component of $s$ by every node from $V(G)$. In iteration $0$, the algorithm labels each $k$-tuple with its \new{atomic type}, i.e., two $k$-tuples $s$ and $s'$ in $V(G)^k$ get the same color if the map $s_i \mapsto s'_i$ induces a (labeled) isomorphism between the subgraphs induced from the nodes from $s$ and $s'$, respectively. For iteration $t > 0$, we define 
\begin{equation}\label{wl-prim}
	C^{(t)}_j(s) = \textsc{hash}_{}\big(\oms c^{(t-1)}_{l,k}(s') \mid s' \in N_j(s)\cms\big), 
\end{equation}
and set 
\begin{equation}\label{labelk}
	c_{k,l}^{(t)}(s)\!=\! \textsc{hash}_{}\Big( \!\big(c_{k,l}^{(t-1)}(s), \big( C^{(t)}_1(s), \dots, C^{(t)}_k(s)  \big) \! \Big)\,.
\end{equation}

Hence, two tuples $s$ and $s'$ with $c_{k,l}^{(t-1)}(s) = c_{k,l}^{(t-1)}(s')$ get different colors in iteration $t$ if there exists $j$ in $[1\!:\!k]$ such that the number of $j$-neighbors of $s$ and $s'$, respectively, colored with a certain color is different. 
The algorithm then proceeds analogously to the \textsc{$1$-WL}. 
By increasing $k$, the algorithm gets more powerful in terms of distinguishing non-isomorphic graphs, i.e., for each $k\geq 2$, there are non-isomorphic graphs which can be distinguished by the ($k+1$)-WL but not by the $k$-WL~\cite{Cai+1992}.
We note here that the above variant is not equal to the \emph{folklore} variant of $k$-WL described in~\cite{Cai+1992}, which differs slightly in its update rule. 
However, it holds that the $k$-WL using~\cref{labelk} is as powerful as the folklore $(k\!-\!1)$-WL \cite{GroheO15}.

\xhdr{WL Kernels}
After running the WL algorithm, the concatenation of the histogram of colors in each iteration can be used as a feature vector in a kernel computation. 
Specifically, in the histogram for every color $\sigma$ in $\Sigma$ there is an entry containing the number of nodes or $k$-tuples that are colored with $\sigma$.

\subsection{Graph Neural Networks}
Let $(G,l)$ be a labeled graph with an initial node coloring $f^{(0)}: V(G)\rightarrow \RR^{1\times d}$ that is \emph{consistent} with $l$.
This means that each node $v$ is annotated with a feature $f^{(0)}(v)$ in $\bbR^{1\times d}$ such that $f^{(0)}(u) = f^{(0)}(v)$ if and only if $l(u) = l(v)$.
Alternatively, $f^{(0)}(v)$ can be an arbitrary  real-valued feature vector associated with $v$.
Examples include continuous atomic properties in chemoinformatic applications where nodes correspond to atoms, or vector representations of text in social network applications. 
A GNN model consists of a stack of neural network layers, where each layer aggregates local neighborhood information, i.e., features of neighbors, around each node and then passes this aggregated information on to the next layer. 

A basic GNN model can be implemented as follows~\cite{Ham+2017a}.
In each layer $t > 0$,  we compute a new feature 
\begin{equation}\label{eq:basicgnn}
	f^{(t)}(v) = \sigma \Big( f^{(t-1)}(v) \cdot  W^{(t)}_1 +\, \sum_{\mathclap{w \in N(v)}}\,\, f^{(t-1)}(w) \cdot W_2^{(t)} \Big)
\end{equation}
in  $\bbR^{1 \times e}$ for $v$, where 
$W_1^{(t)}$ and $W_2^{(t)}$ are parameter matrices from $\bbR^{d \times e}$
, and $\sigma$ denotes a component-wise non-linear function, e.g., a sigmoid or a ReLU.\footnote{For clarity of presentation we omit biases.}

Following~\cite{Gil+2017}, one may also replace the sum defined over the neighborhood in the above equation by a permutation-invariant, differentiable function, and one may substitute the outer sum, e.g., by a column-wise vector concatenation or LSTM-style update step.
Thus, in full generality a new feature $f^{(t)}(v)$ is computed as
\begin{equation}\label{eq:gnngeneral}
	f^{W_1}_{\text{merge}}\Big(f^{(t-1)}(v) ,f^{W_2}_{\text{aggr}}\big(\oms f^{(t-1)}(w) \mid  w \in N(v)\cms \big)\!\Big),
\end{equation}
where $f^{W_1}_{\text{aggr}}$ aggregates over the set of neighborhood features and $f^{W_2}_{\text{merge}}$ merges the node's representations from step $(t-1)$ with the computed neighborhood features.
Both $f^{W_1}_{\text{aggr}}$ and $f^{W_2}_{\text{merge}}$ may be arbitrary differentiable, permutation-invariant functions (e.g., neural networks), and, by analogy to Equation \ref{eq:basicgnn}, we denote their parameters as $W_1$ and $W_2$, respectively. 
In the rest of this paper, we refer to neural architectures implementing~\cref{eq:gnngeneral} as \emph{$1$-dimensional GNN architectures} ($1$-GNNs).

A vector representation $f_{\text{GNN}}$ over the whole graph can be computed by summing over the vector representations computed for all nodes, i.e.,  
\begin{equation*}
	f_{\text{GNN}}(G) = \sum_{v \in V(G)} f^{(T)}(v),
\end{equation*}
where $T > 0$ denotes the last layer. More refined approaches use differential pooling operators based on sorting~\cite{Zha+2018} and soft assignments~\cite{Yin+2018}. 

In order to adapt the parameters $W_1$ and $W_2$ of~\cref{eq:basicgnn,eq:gnngeneral}, to a given data distribution, they are optimized in an end-to-end  fashion (usually via stochastic gradient descent) together with the parameters of a neural network used for classification or regression.

\section{Relationship Between 1-WL and 1-GNNs}

In the following we explore the relationship between the $1$-WL and $1$-GNNs. 
Let $(G,l)$ be a labeled graph, and let $\mathbf{W}^{(t)} = \big(W^{(t')}_1, W^{(t')}_2 \big)_{t'\leq t}$ denote the GNN parameters given by \cref{eq:basicgnn} or~\cref{eq:gnngeneral} up to iteration $t$. 
We encode the initial labels $l(v)$ by vectors  $f^{(0)}(v)\in\RR^{1\times d}$, e.g., using a $1$-hot encoding. 

Our first theoretical result shows that the $1$-GNN architectures do not have more power in terms of distinguishing between non-isomorphic (sub-)graphs than the $1$-WL algorithm.
More formally, let $f^{W_1}_{\text{merge}}$ and $f^{W_2}_{\text{aggr}}$ be any two functions chosen in \eqref{eq:gnngeneral}.
For every encoding of the labels $l(v)$ as vectors $f^{(0)}(v)$, and for every choice of $\mathbf{W}^{(t)}$, we have that the coloring $c^{(t)}_l$ of $1$-WL always refines the coloring $f^{(t)}$ induced by a $1$-GNN parameterized by $\mathbf{W}^{(t)}$.

\begin{theorem}\label{thm:refine}
	Let $(G, l)$ be a labeled graph. Then for all $t\ge 0$ and for all choices of initial colorings $f^{(0)}$ consistent with $l$, and weights $\mathbf{W}^{(t)}$,
	\begin{equation*}
		c^{(t)}_l \sqsubseteq f^{(t)}\,.
	\end{equation*}
\end{theorem}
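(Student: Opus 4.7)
The plan is a straightforward induction on the iteration number $t$, showing directly from the definitions that whenever two nodes $v,w$ receive the same $1$-WL color at step $t$, they must also receive the same GNN feature at step $t$. This is exactly the refinement statement $c^{(t)}_l \sqsubseteq f^{(t)}$.

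For the base case $t=0$, I would invoke the assumption that $f^{(0)}$ is consistent with $l$: by definition $f^{(0)}(u) = f^{(0)}(v)$ iff $l(u) = l(v)$, and since $c_l^{(0)} = l$ we get $c_l^{(0)} \equiv f^{(0)}$, so in particular $c_l^{(0)} \sqsubseteq f^{(0)}$. For the inductive step, assume $c_l^{(t-1)} \sqsubseteq f^{(t-1)}$ and suppose $c_l^{(t)}(v) = c_l^{(t)}(w)$. Because $\textsc{hash}$ in \cref{eq:wlColoring} is injective, this forces $c_l^{(t-1)}(v) = c_l^{(t-1)}(w)$ together with the multiset identity
\begin{equation*}
\oms c_l^{(t-1)}(u) \mid u \in N(v)\cms \;=\; \oms c_l^{(t-1)}(u) \mid u \in N(w)\cms .
\end{equation*}

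From the induction hypothesis, the map $c_l^{(t-1)}(u) \mapsto f^{(t-1)}(u)$ is well defined (equal $c$-colors force equal $f$-features), so applying this map componentwise to the multisets above yields $f^{(t-1)}(v) = f^{(t-1)}(w)$ and
\begin{equation*}
\oms f^{(t-1)}(u) \mid u \in N(v)\cms \;=\; \oms f^{(t-1)}(u) \mid u \in N(w)\cms .
\end{equation*}
Since the general $1$-GNN update \cref{eq:gnngeneral} is, by assumption, a function only of the pair consisting of the node's previous feature and the multiset of its neighbors' previous features (the functions $f^{W_1}_{\text{merge}}$ and $f^{W_2}_{\text{aggr}}$ are permutation-invariant in the neighborhood argument), the two inputs for $v$ and $w$ coincide, hence $f^{(t)}(v) = f^{(t)}(w)$. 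This completes the induction, and \cref{eq:basicgnn} is handled as a special case of \cref{eq:gnngeneral}.

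The only conceptually non-trivial point, and the step I would be most careful about, is the passage from equal multisets of $c$-colors to equal multisets of $f$-features: this hinges on the fact that a refinement $c \sqsubseteq f$ factors through a well-defined map on colors, which can then be pushed through the multiset brackets. Everything else reduces to bookkeeping and the injectivity of $\textsc{hash}$ on the WL side, together with the permutation-invariance of the aggregation on the GNN side; no assumption on the particular choice of $\mathbf{W}^{(t)}$ or non-linearity $\sigma$ is needed, which is exactly why the bound holds for every parameterization.
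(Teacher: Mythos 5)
Your proposal is correct and follows essentially the same route as the paper's proof: induction on $t$, using consistency of $f^{(0)}$ with $l$ for the base case, and in the inductive step extracting from $c_l^{(t)}(v)=c_l^{(t)}(w)$ the equality of previous colors and neighbor-color multisets, pushing these through the induction hypothesis to equal feature inputs, and concluding by the permutation-invariance of $f^{W_2}_{\text{aggr}}$ and $f^{W_1}_{\text{merge}}$. Your explicit remark that the refinement $c^{(t-1)}_l \sqsubseteq f^{(t-1)}$ induces a well-defined map on colors, which is then applied elementwise inside the multiset brackets, is exactly the step the paper's proof leaves implicit when it asserts $M_u = M_v$.
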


Our second result states that there exist a sequence of parameter matrices $\mathbf{W}^{(t)}$ such that $1$-GNNs have exactly the same power in terms of distinguishing non-isomorphic \mbox{(sub-)}graphs as the $1$-WL algorithm.
This even holds for the simple architecture~\eqref{eq:basicgnn}, provided we choose the encoding of the initial labeling $l$ in such a way that different labels are encoded by linearly independent vectors.

\begin{theorem}\label{equal}
	Let $(G, l)$ be a labeled graph. Then for all \mbox{$t\geq 0$} there exists a sequence of weights $\mathbf{W}^{(t)}$, and a $1$-GNN architecture such that 
	\begin{equation*}
		c^{(t)}_l \equiv f^{(t)}\,.
	\end{equation*}
\end{theorem}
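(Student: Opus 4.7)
Since Theorem~\ref{thm:refine} already gives $c_l^{(t)}\sqsubseteq f^{(t)}$ for every consistent initialization and every choice of weights, only the reverse refinement $f^{(t)}\sqsubseteq c_l^{(t)}$ remains. The plan is to build, by induction on $t$, a specific instance of~\eqref{eq:gnngeneral} whose feature $f^{(t)}(v)$ is a one-hot vector indexed by the (finitely many) colors that $1$-WL produces at iteration $t$. Equality of one-hot vectors coincides with equality of the underlying colors, and their pairwise linear independence is what makes the inductive step go through.

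\textbf{Base case.} Take $f^{(0)}$ to be a one-hot encoding of $l$: map each label in the finite codomain of $l$ to a distinct standard basis vector of $\bbR^{1\times d_0}$. This encoding is consistent with $l$ and satisfies $f^{(0)}\equiv c_l^{(0)}$.

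\textbf{Inductive step.} Assume $f^{(t-1)}$ is a one-hot encoding of $c_l^{(t-1)}$. Then the aggregation $\sum_{w\in N(v)} f^{(t-1)}(w)$ equals the vector of multiplicities of each $(t-1)$-color inside $N(v)$, so it injectively represents the multiset $\oms c_l^{(t-1)}(w)\mid w\in N(v)\cms$. Concatenating this sum with $f^{(t-1)}(v)$ (an instance of the merge operation admitted in~\eqref{eq:gnngeneral}) yields a vector that injectively encodes the pair $\big(c_l^{(t-1)}(v),\, \oms c_l^{(t-1)}(w)\mid w\in N(v)\cms\big)$, which by~\eqref{eq:wlColoring} determines $c_l^{(t)}(v)$. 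Since only finitely many such pairs arise, there exists an MLP that maps each of them bijectively to the one-hot indicator of the corresponding $c_l^{(t)}(v)$. Absorbing that MLP into $f^{W_1}_{\text{merge}}$ of~\eqref{eq:gnngeneral} and choosing $f^{W_2}_{\text{aggr}}$ to be summation gives $f^{(t)}$ as a one-hot encoding of $c_l^{(t)}$, which completes the induction and yields $f^{(t)}\equiv c_l^{(t)}$.

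\textbf{Main obstacle.} The only delicate point is realizing the re-hot-encoding map strictly inside the prescribed update form. Within the general architecture~\eqref{eq:gnngeneral} this is painless, because $f^{W_1}_{\text{merge}}$ and $f^{W_2}_{\text{aggr}}$ may be arbitrary differentiable permutation-invariant neural networks and only finitely many distinct input vectors have to be separated, so a two-layer MLP with one hidden unit per WL color suffices (this is a finite-set interpolation, not a universal-approximation argument in disguise). The genuinely technical case would be restricting to the shallow form~\eqref{eq:basicgnn}: there one has to choose $W_1^{(t)}$ and $W_2^{(t)}$ as block matrices that keep the own-color coordinates disjoint from the neighbor-count coordinates, enlarge the feature dimension to accommodate a fresh one-hot code at each step, and exploit the nonlinearity $\sigma$ to discretize the aggregated vectors into the new codes; since only finitely many pre-activation patterns appear, this remains possible, but verifying it is the step where the bulk of the bookkeeping lives.
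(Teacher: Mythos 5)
Your proof is correct for the theorem as literally stated, but it takes a genuinely different and substantially lighter route than the paper. You exploit the full generality of~\eqref{eq:gnngeneral}: by maintaining the invariant that $f^{(t)}$ is a one-hot encoding of $c^{(t)}_l$, sum-aggregation yields the exact color-count vector of the neighborhood, and a finite-interpolation MLP inside $f^{W_1}_{\text{merge}}$ re-hot-encodes the resulting pair; since the theorem only asserts the existence of \emph{some} $1$-GNN architecture, this closes the argument. The paper instead proves the stronger claim announced in the text preceding the theorem, namely that the \emph{simple} architecture~\eqref{eq:basicgnn} already suffices --- a single linear map followed by a pointwise nonlinearity ($\sgn$, later simulated by two ReLU layers), with no free-form merge network available. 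That restriction is exactly where the paper's technical weight lies: its invariant ``row-independent modulo equality'' is the linear-algebraic relaxation of your one-hot property that can actually be propagated through one affine layer plus $\sgn$, and \cref{lem:dist2lu} (constructing $X$ so that $\sgn(BX-J)$ is a non-singular triangular $\pm1$ pattern) is the device that simultaneously separates all color classes and restores linear independence for the next step; \cref{lem:equivalencePerStep} then assembles this into one WL round, and the colored case is handled by concatenating the initial encoding. You correctly identify this shallow-architecture case as ``where the bulk of the bookkeeping lives'' but do not carry it out, so your argument establishes \cref{equal} itself while leaving unproved the paper's additional assertion that~\eqref{eq:basicgnn} with linearly independent initial label encodings is already enough. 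One small point to make airtight in your version: exact (not approximate) realization of the finite bijection onto one-hot codes by a differentiable network should be justified by a finite-interpolation argument with one hidden unit per distinct input pattern, which is what you gesture at and is indeed standard.
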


Hence, in the light of the above results, $1$-GNNs may viewed as an extension of the $1$-WL which in principle have the same power but are more flexible in their ability to adapt to the learning task at hand and are able to handle continuous node features.

\subsection{Shortcomings of Both Approaches}

The power of $1$-WL has been completely characterized, see, e.g.,~\cite{Arv+2015}.  
Hence, by using~\cref{thm:refine,equal}, this characterization is also applicable to $1$-GNNs. 
On the other hand, $1$-GNNs have the same shortcomings as the $1$-WL. 
For example, both methods will give the same color to every node in a graph consisting of a triangle and a $4$-cycle, although vertices from the triangle and the vertices from the $4$-cycle are clearly different.
Moreover, they are not capable of capturing simple graph theoretic properties, e.g., triangle counts, which are an important measure in social network analysis~\cite{Mil+2002,New2003}.

\section{$\boldsymbol{k}$-dimensional Graph Neural Networks}

\begin{figure*}[t]
	\centering
	\begin{subfigure}[b]{0.65\linewidth}
		\centering
		\includegraphics[width=0.7\textwidth]{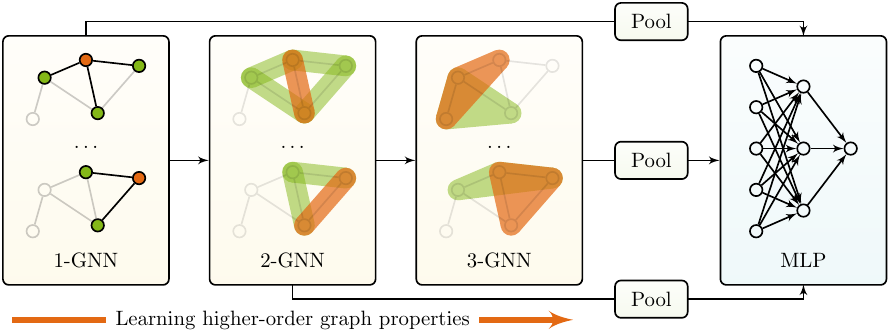}
		\caption{Hierarchical 1-2-3-GNN network architecture}\label{fig:architecture}
	\end{subfigure}
	\hspace{-.5cm}
	\begin{subfigure}[b]{0.28\linewidth}
		\centering
		\includegraphics[width=0.7\textwidth]{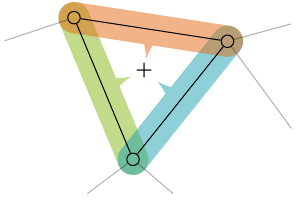}
		\caption{Pooling from $2$- to $3$-GNN.}\label{fig:pooling}
	\end{subfigure}
	\caption{Illustration of the proposed hierarchical variant of the $k$-GNN layer. For each subgraph $S$ on $k$ nodes a feature $f$ is learned, which is initialized with the learned features of all $(k-1)$-element subgraphs of $S$. Hence, a hierarchical representation of the input graph is learned.}\label{fig:overview}
\end{figure*}

In the following, we propose a generalization of $1$-GNNs, so-called $k$-GNNs, which are based on the $k$-WL. Due to scalability and limited GPU memory, we consider a set-based version of the $k$-WL. For a given $k$, we consider all $k$-element subsets $[V(G)]^k$ over $V(G)$. Let $s =  \{ s_1, \dotsc, s_k \}$ be a $k$-set in $[V(G)]^k$, then we define the \emph{neighborhood} of $s$ as 
\begin{equation*}\label{eq:localNeighborhood}
	N(s) = \{ t\in [V(G)]^k\mid |s\cap t|=k-1\}\,.
\end{equation*}
The \emph{local neighborhood}\, $N_L(s)$ consists of all $t\in N(s)$ such that $(v,w)\in E(G)$ for the unique $v\in s\setminus t$ and the unique $w\in t\setminus s$. The \emph{global neighborhood} $N_G(s)$ then is defined as $N(s) \setminus N_{\text{L}}(s)$.\footnote{Note that the definition of the local neighborhood is different from the the one defined in~\cite{Mor+2017} which is a superset of our definition. 
Our computations therefore involve sparser graphs.}

The set based $k$-WL works analogously to the $k$-WL, i.e., it computes a coloring $c_{\text{s},k,l}^{(t)} : [V(G)]^k \to \Sigma$ as in \cref{eq:wlColoring} based on the above neighborhood. 
Initially, $c_{\text{s},k,l}^{(0)}$ colors each element $s$ in $[V(G)]^k$ with the isomorphism type of $G[s]$. 

Let $(G,l)$ be a labeled graph. 
In each $k$-GNN layer $t \geq 0$, we compute a feature vector $f^{(t)}_{k}(s)$ for each $k$-set $s$ in $[V(G)]^k$. 
For $t=0$, we set $f^{(0)}_{k}(s)$  to  $f^{\text{iso}}(s)$, a one-hot encoding  of the isomorphism type of $G[s]$ labeled by $l$. In each layer $t > 0$,  we compute new features by 
\begin{align*}
	f^{(t)}_{k}(s) =  \sigma & \Big(  f^{(t-1)}_{k}(s) \cdot W_1^{(t)} +\, \sum_{\mathclap{u \in N_L (s) \cup N_G (s)}} \, f^{(t-1)}_{k}(u) \cdot W_2^{(t)}\Big)\,. 
\end{align*}
Moreover, one could split the sum into two sums ranging over $N_L (s)$ and $N_G (s)$ respectively, using distinct parameter matrices to enable the model to learn the importance of local and global neighborhoods. 
To scale $k$-GNNs to larger datasets and to prevent overfitting, we propose \emph{local} $k$-GNNs, where we omit the global neighborhood of  $s$, i.e.,
\begin{equation*}
	f^{(t)}_{k,\text{L}}(s) = \sigma \Big( f^{(t-1)}_{k,\text{L}}(s) \cdot W^{(t)}_1 + \,\sum_{\mathclap{u \in N_{L}(s)}} \, f^{(t-1)}_{k,\text{L}}(u) \cdot  W^{(t)}_2  \Big)\,.
\end{equation*}
The running time for evaluation of the above depends on $|V|$, $k$ and the sparsity of the graph (each iteration can be bounded by the number of subsets of size $k$ times the maximum degree). Note that we can scale our method to larger datasets by using sampling strategies introduced in, e.g.,~\cite{Mor+2017,Ham+2017}. We can now lift the results of the previous section to the $k$-dimensional case. 
\begin{proposition}\label{pro:refines}
	Let $(G, l)$ be a labeled graph and let $k\geq 2$. Then for all $t \ge 0$, for all choices of initial colorings $f_k^{(0)}$ consistent with $l$ and for all weights $\mathbf{W}^{(t)}$,
	\begin{equation*}
		c^{(t)}_{\text{s},k,l} \sqsubseteq f^{(t)}_{k}\,. 
	\end{equation*}
\end{proposition}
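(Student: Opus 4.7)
The plan is to prove \cref{pro:refines} by induction on $t$, lifting the argument that presumably establishes \cref{thm:refine} from single nodes to $k$-sets, and from $N(v)$ to the combined neighborhood $N(s) = N_L(s)\cup N_G(s)$. The essential structural fact that makes the induction go through is that the update rule of the set-based $k$-WL and the update rule of the $k$-GNN both aggregate the pair consisting of (i) the previous label of $s$ itself and (ii) a multiset of previous labels over exactly the same index set $N(s)$; refinement is preserved by any such aggregation as long as the GNN side is deterministic and permutation-invariant in the neighbors.

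For the base case $t=0$, consistency of $f_k^{(0)}$ with $l$ should be read as the natural $k$-set analogue of the $1$-GNN notion: $f_k^{(0)}(s)=f_k^{(0)}(s')$ iff $G[s]$ and $G[s']$ have the same labeled isomorphism type. Since $c_{\text{s},k,l}^{(0)}$ is by definition the coloring by isomorphism types, we get $c_{\text{s},k,l}^{(0)}\equiv f_k^{(0)}$, and in particular $c_{\text{s},k,l}^{(0)}\sqsubseteq f_k^{(0)}$.

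For the inductive step, assume $c_{\text{s},k,l}^{(t-1)}\sqsubseteq f_k^{(t-1)}$ and take $s,s'\in[V(G)]^k$ with $c_{\text{s},k,l}^{(t)}(s)=c_{\text{s},k,l}^{(t)}(s')$. Injectivity of $\textsc{hash}$ in the set-based analogue of \cref{eq:wlColoring} gives both
\begin{equation*}
c_{\text{s},k,l}^{(t-1)}(s)=c_{\text{s},k,l}^{(t-1)}(s') \quad\text{and}\quad \oms c_{\text{s},k,l}^{(t-1)}(u)\mid u\in N(s)\cms = \oms c_{\text{s},k,l}^{(t-1)}(u)\mid u\in N(s')\cms.
\end{equation*}
The first identity together with the inductive hypothesis yields $f_k^{(t-1)}(s)=f_k^{(t-1)}(s')$. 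The second provides a color-preserving bijection $\pi:N(s)\to N(s')$ with $c_{\text{s},k,l}^{(t-1)}(u)=c_{\text{s},k,l}^{(t-1)}(\pi(u))$ for every $u\in N(s)$; applying the inductive hypothesis pointwise along $\pi$ upgrades this to equality of the multisets $\oms f_k^{(t-1)}(u)\mid u\in N(s)\cms=\oms f_k^{(t-1)}(u)\mid u\in N(s')\cms$. Permutation-invariance of the sum in the $k$-GNN update then gives
\begin{equation*}
\sum_{u\in N(s)} f_k^{(t-1)}(u)\cdot W_2^{(t)} \;=\; \sum_{u\in N(s')} f_k^{(t-1)}(u)\cdot W_2^{(t)},
\end{equation*}
and combining this with $f_k^{(t-1)}(s)=f_k^{(t-1)}(s')$ and componentwise application of $\sigma$ produces $f_k^{(t)}(s)=f_k^{(t)}(s')$, closing the induction.

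I do not expect a serious obstacle; the argument is a template lift of the $1$-WL/$1$-GNN refinement proof. The only points that need a little care are (a) fixing the notion of consistency of $f_k^{(0)}$ with $l$ so that the base case matches the isomorphism-type initialization of $c_{\text{s},k,l}^{(0)}$, and (b) being explicit that the neighborhood appearing in the $k$-GNN update is the same $N(s)=N_L(s)\cup N_G(s)$ used by the set-based $k$-WL, so that the multiset equality transfers directly. The same argument goes through verbatim for a general permutation-invariant merge/aggregate pair in the spirit of \cref{eq:gnngeneral}; for the local variant $f_{k,L}^{(t)}$ one would instead compare against a set-based $k$-WL that is restricted to $N_L(s)$, but the inductive machinery is identical.
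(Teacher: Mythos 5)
Your proof is correct and follows essentially the same route as the paper's: an induction on $t$ that lifts the argument for \cref{thm:refine} from nodes to $k$-sets, using that equality of $k$-WL colors forces equality of the previous colors and of the neighbor-color multisets over $N(s)=N_L(s)\cup N_G(s)$, which the induction hypothesis and permutation-invariance of the aggregation then convert into equality of the $k$-GNN features. The paper's version is only a brief sketch, so your more explicit treatment of the base case and of the multiset bijection is a faithful elaboration rather than a different argument.
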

Again the second result states that there exists a suitable initialization of the parameter matrices $\mathbf{W}^{(t)}$ such that $k$-GNNs have exactly the same power in terms of distinguishing non-isomorphic (sub-)graphs as the set-based $k$-WL.
\begin{proposition}\label{pro:equality}
	Let $(G, l)$ be a labeled graph and let $k\geq 2$. Then for all $t \geq 0$ there exists a sequence of weights $\mathbf{W}^{(t)}$, and a $k$-GNN architecture such that   
	\begin{equation*}
		c^{(t)}_{\text{s},k,l} \equiv f^{(t)}_{k}\,.
	\end{equation*}		
\end{proposition}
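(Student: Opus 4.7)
The plan is to follow the argument of Theorem \ref{equal} by induction on $t$, with the only genuinely new ingredient being the base case, which now concerns isomorphism types of induced subgraphs on $k$-sets rather than single node labels.

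For the base case $t=0$, I would observe that $f^{(0)}_{k}(s) = f^{\text{iso}}(s)$ is already defined, in the paper's setup, as a one-hot encoding of the isomorphism type of $G[s]$ labelled by $l$; this is exactly the information used to assign the initial set-based $k$-WL color $c^{(0)}_{\text{s},k,l}(s)$. Hence $f^{(0)}_{k}(s) = f^{(0)}_{k}(s')$ if and only if $c^{(0)}_{\text{s},k,l}(s) = c^{(0)}_{\text{s},k,l}(s')$, which gives $c^{(0)}_{\text{s},k,l} \equiv f^{(0)}_{k}$.

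For the inductive step I would assume $c^{(t-1)}_{\text{s},k,l} \equiv f^{(t-1)}_{k}$. Proposition \ref{pro:refines} already supplies $c^{(t)}_{\text{s},k,l} \sqsubseteq f^{(t)}_{k}$, so only the reverse refinement must be proven: whenever two $k$-sets receive distinct colors from the set-based $k$-WL at step $t$, they must also receive distinct features from the $k$-GNN. Since $N_L(s) \cup N_G(s) = N(s)$ by the very definition of $N_G$, the $k$-GNN aggregates over exactly the neighborhood used by the set-based $k$-WL, and the task reduces to showing that the $k$-GNN update can simulate an injective hash of the pair $\bigl(c^{(t-1)}_{\text{s},k,l}(s), \oms c^{(t-1)}_{\text{s},k,l}(u) \mid u \in N(s) \cms\bigr)$. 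By strengthening the inductive hypothesis, I may assume $f^{(t-1)}_{k}$ is a one-hot encoding of the step-$(t-1)$ colors on a finite palette of basis vectors; the multiset sum $\sum_{u \in N(s)} f^{(t-1)}_{k}(u)$ is then precisely the count vector of the neighborhood multiset and therefore encodes it injectively. Choosing $W_1^{(t)}$ and $W_2^{(t)}$ with columns supported on disjoint blocks of coordinates yields an injective linear encoding of the entire pair, after which a component-wise $\sigma$ followed by a suitable linear relabelling can be arranged to act injectively on the finitely many pre-activation patterns while restoring a one-hot image for $f^{(t)}_{k}$; this carries the induction forward.

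The main obstacle is the interface with the non-linearity $\sigma$: one must guarantee both that the injective encoding survives $\sigma$ and that after one layer the features can again be arranged as a fresh one-hot encoding, otherwise the inductive hypothesis is too weak to iterate. This is the same technical issue that appears in the proof of Theorem \ref{equal}, and it can be handled by reserving sufficiently many output coordinates and scaling the pre-activations into a range where $\sigma$ is injective on the (finite) set of relevant values, together with an auxiliary linear map that re-bases the image. Once this bookkeeping is in place, the remainder of the argument is the combinatorial simulation of the WL hash already used for $k=1$, with the set neighborhood $N(s)$ substituted throughout; verifying that this substitution is faithful is the one place where one must pause, but it reduces to the identity $N_L(s) \cup N_G(s) = N(s)$ noted above.
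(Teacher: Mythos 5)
Your proof is essentially sound, but it takes a different route from the paper. You redo the inductive simulation of Theorem~2 directly at the level of $k$-sets, whereas the paper packages the same key observation --- that $N_L(s)\cup N_G(s)=N(s)$, so the $k$-GNN aggregates over exactly the set-based $k$-WL neighborhood --- into a formal reduction: it defines an auxiliary graph $G^{\otimes k}$ on vertex set $[V(G)]^k$ with $s\sim t$ iff $|s\cap t|=k-1$, labels its vertices by isomorphism types, observes that set-based $k$-WL on $G$ is literally $1$-WL on $G^{\otimes k}$ and that the $k$-GNN is literally a $1$-GNN on $G^{\otimes k}$, and then invokes Theorem~2 as a black box to obtain the weight sequence. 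The paper's reduction is shorter and avoids re-confronting the activation-function bookkeeping; your direct induction re-imports exactly the technical difficulties of Theorem~2 (in particular, a ``linear relabelling after $\sigma$'' is not available inside a single layer of the architecture $\sigma(FW_1+AFW_2)$, so you must either absorb it into the next layer's weights, spend an extra layer as in the paper's ReLU corollary, or replace your one-hot invariant by the paper's weaker ``row-independent modulo equality'' invariant together with the non-singularity lemma for $\sgn(BX-J)$). Since you explicitly flag this obstacle and defer to the Theorem~2 machinery, the argument goes through, but you would save yourself the repetition by stating the $G^{\otimes k}$ correspondence once and citing Theorem~2 directly.
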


\subsection{Hierarchical Variant}

One key benefit of the end-to-end trainable $k$-GNN frame\-work---compared to the discrete $k$-WL algorithm---is that we can hierarchically combine representations learned at different granularities.
Concretely, rather than simply using one-hot indicator vectors as initial feature inputs in a $k$-GNN, we propose a \emph{hierarchical} variant of $k$-GNN that uses the features learned by a $(k-1)$-dimensional GNN, in addition to the (labeled) isomorphism type, as the initial features, i.e.,
\begin{equation*}
	f^{(0)}_{k}(s) =  \sigma \Big(\Big[f^{\text{iso}}(s), \sum_{u \subset s} f^{\,(T_{k-1})}_{k-1}(u) \Big] \cdot W_{k-1} \Big),
\end{equation*}
for some $T_{k-1} > 0$, where $W_{k-1}$ is a matrix of appropriate size, and square brackets denote matrix concatenation. 

Hence, the features are recursively learned from dimensions $1$ to $k$ in an end-to-end fashion.
This hierarchical model also satisfies \cref{pro:refines,pro:equality}, so its representational capacity is theoretically equivalent to a standard $k$-GNN (in terms of its relationship to $k$-WL).
Nonetheless, hierarchy is a natural inductive bias for graph modeling, since many real-world graphs incorporate hierarchical structure, so we expect this hierarchical formulation to offer empirical utility. 
\begin{table*}[t]\
	\caption{Classification accuracies in percent on various graph benchmark datasets.}
	\label{fig:classification_results}
	\renewcommand{\arraystretch}{0.90}
	\centering
	\begin{tabular}{@{}clccccccc@{}}
		\toprule
		& \multirow{3}{*}{\vspace*{8pt}\textbf{Method}} & \multicolumn{7}{c}{\textbf{Dataset}} \\
		\cmidrule{3-9}
		&                                    & {\textsc{Pro}} & {\textsc{IMDB-Bin}} & \!{\textsc{IMDB-Mul}} & \!{\textsc{PTC-FM}} & \!{\textsc{NCI1}} & \!{\textsc{Mutag}} & \!{\textsc{PTC-MR}} \\
		\cmidrule{2-9}
		\multirow{6}{*}{\rotatebox{90}{\hspace*{-6pt}Kernel}}
		& \textsc{Graphlet}                  & 72.9           & 59.4                & 40.8                  & 58.3                & 72.1              & 87.7               & 54.7                \\
		& \textsc{Shortest-path}             & \textbf{76.4}  & 59.2                & 40.5                  & 62.1                & 74.5              & 81.7               & 58.9                \\
		& \textsc{$1$-WL}                    & 73.8           & 72.5                & \textbf{51.5}         & 62.9                & 83.1              & 78.3               & 61.3                \\
		& \textsc{$2$-WL}                    & 75.2           & 72.6                & 50.6                  & \textbf{64.7}       & 77.0              & 77.0               & 61.9                \\
		& \textsc{$3$-WL}                    & 74.7           & 73.5                & 49.7                  & 61.5                & 83.1              & 83.2               & 62.5                \\
		& \textsc{WL-OA}                     & 75.3           & 73.1                & 50.4                  & 62.7                & \textbf{86.1}     & 84.5               & \textbf{63.6}       
		\\
		\cmidrule{2-9}
		\multirow{7}{*}{\rotatebox{90}{GNN}}
		& \textsc{DCNN}                      & 61.3           & 49.1                & 33.5                  & ---                 & 62.6              & 67.0               & 56.6                \\
		& \textsc{PatchySan}                 & 75.9           & 71.0                & 45.2                  & ---                 & 78.6              & \textbf{92.6}      & 60.0                \\
		& \textsc{DGCNN}                     & 75.5           & 70.0                & 47.8                  & ---                 & 74.4              & 85.8               & 58.6                \\     
		\cmidrule{2-9}   
		
		& \textsc{$1$-Gnn No Tuning}         & 70.7           & 69.4                & 47.3                  & 59.0                & 58.6              & 82.7               & 51.2                \\
		& \textsc{$1$-Gnn}                   & 72.2           & 71.2                & 47.7                  & 59.3                & 74.3              & 82.2               & 59.0                \\
		& \textsc{$1$-$2$-$3$-Gnn No Tuning} & 75.9           & 70.3                & 48.8                  & 60.0                & 67.4              & 84.4               & 59.3                \\
		& \textsc{$1$-$2$-$3$-Gnn}           & 75.5           & \textbf{74.2}       & 49.5                  & 62.8                & 76.2              & 86.1               & 60.9                \\
		\bottomrule
	\end{tabular}
\end{table*}

\begin{table*}[t]\
	\caption{%
		Mean absolute errors on the \textsc{Qm9} dataset. The far-right column shows the improvement of the best $k$-GNN model in comparison to the $1$-GNN baseline.
	}%
	\label{fig:qm9_results}
	\renewcommand{\arraystretch}{1.0}
	\centering
	\begin{tabular}{lccccc}
		\toprule
		\multirow{3}{*}{\vspace*{8pt}\textbf{Target}} & \multicolumn{5}{c}{\textbf{Method}} \\
		\cmidrule{2-6}
		&   \textsc{$1$-Gnn}   & \!\textsc{$1$-$2$-Gnn} & \textsc{$1$-$3$-Gnn} & \textsc{$1$-$2$-$3$-Gnn}\! & Gain   \\
		\midrule
		$\mu$                                      & 0.493              & 0.493                  & $\mathbf{0.473}$     & 0.476                      & 4.0\%  \\
		$\alpha$                                                   & 0.78               & $\mathbf{0.27}$        & 0.46                 & $\mathbf{0.27}$            & 65.3\% \\
		$\varepsilon_{\text{HOMO}}$            & $\mathbf{0.00321}$ & 0.00331                & 0.00328              & 0.00337                    & --     \\
		$\varepsilon_{\text{LUMO}}$                  & 0.00355            & $\mathbf{0.00350}$     & 0.00354              & 0.00351                    & 1.4\%  \\
		$\Delta\varepsilon$                           & 0.0049             & 0.0047                 & $\mathbf{0.0046}$    & 0.0048                     & 6.1\%  \\
		$\langle R^2 \rangle$                       & 34.1               & 21.5                   & 25.8                 & 22.9                       & 37.0\% \\
		\textsc{ZPVE}                      & 0.00124            & $\mathbf{0.00018}$     & 0.00064              & 0.00019                    & 85.5\% \\
		$U_0$                                         & 2.32               & $\mathbf{0.0357}$      & 0.6855               & 0.0427                     & 98.5\% \\
		$U$                                                 & 2.08               & $\mathbf{0.107}$       & 0.686                & 0.111                      & 94.9\% \\
		$H$                                                      & 2.23               & 0.070                  & 0.794                & $\mathbf{0.0419}$          & 98.1\% \\
		$G$                                              & 1.94               & 0.140                  & 0.587                & $\mathbf{0.0469}$          & 97.6\% \\
		$C_{\text{v}}$                                & 0.27               & 0.0989                 & 0.158                & $\mathbf{0.0944}$          & 65.0\% \\
		\bottomrule
	\end{tabular}
\end{table*}

\section{Experimental Study}

In the following, we want to investigate potential benefits of GNNs over graph kernels as well as the benefits of our proposed $k$-GNN architectures over $1$-GNN architectures. More precisely, we address the following questions:
\begin{description}
	\item[Q1] How do the (hierarchical) $k$-GNNs perform in comparison to state-of-the-art graph kernels? 
	\item[Q2] How do the (hierarchical)  $k$-GNNs perform in comparison to the $1$-GNN in graph classification and regression tasks?
	\item[Q3]
	How much (if any) improvement is provided by optimizing the parameters of the GNN aggregation function, compared to just using random GNN parameters while optimizing the parameters of the downstream classification/regression algorithm?
\end{description}

\subsection{Datasets }
To compare our $k$-GNN architectures to kernel approaches we use well-established benchmark datasets from the graph kernel literature~\cite{KKMMN2016}. The nodes of each graph in these dataset is annotated with (discrete) labels or no labels. 

To demonstrate that our architectures scale to larger datasets and offer benefits on real-world applications, we conduct experiments on the \textsc{Qm9} dataset~\cite{Ram+2014,Rud+2012,Wu+2018}, which consists of 133\,385 small molecules. The aim here is to perform regression on twelve targets representing energetic, electronic, geometric, and thermodynamic properties, which were computed using density functional theory.

\subsection{Baselines}

We use the following kernel and GNN methods as baselines for our experiments.

\xhdr{Kernel Baselines} We use the Graphlet kernel~\cite{She+2009}, the shortest-path kernel~\cite{Borgwardt2005}, the Weisfeiler-Lehman subtree kernel (\textsc{WL})~\cite{She+2011}, the Weisfeiler-Lehman Optimal Assignment kernel (\textsc{WL-OA})~\cite{Kri+2016}, and the global-local $k$-WL~\cite{Mor+2017} with $k$ in $\{2,3\}$ as kernel baselines. 
For each kernel, we computed the normalized Gram matrix. 
We used the $C$-SVM implementation of LIBSVM~\cite{Cha+2011} to compute the classification accuracies using 10-fold cross validation. 
The parameter $C$ was selected from $\{10^{-3}, 10^{-2}, \dotsc, 10^{2},$ $10^{3}\}$ by 10-fold cross validation on the training folds. 

\xhdr{Neural Baselines} To compare GNNs to kernels we used the basic $1$-GNN layer of~\cref{eq:basicgnn}, DCNN~\cite{Wang2018}, PatchySan~\cite{Nie+2016}, DGCNN~\cite{Zha+2018}. For the \textsc{Qm9} dataset we used a $1$-GNN layer similar to~\cite{Gil+2017}, where we replaced the inner sum of~\cref{eq:basicgnn} with a 2-layer MLP in order incorporate edge features (bond type and distance information).

\subsection{Model Configuration}

We always used three layers for $1$-GNN, and two layers for (local) $2$-GNN and $3$-GNN, all with a hidden-dimension size of $64$. 
For the hierarchical variant we used architectures that use features computed by $1$-GNN as initial features for the $2$-GNN ($1$-$2$-GNN)  and $3$-GNN ($1$-$3$-GNN), respectively. 
Moreover, using the combination of the former we componentwise concatenated the computed features of the $1$-$2$-GNN and the $1$-$3$-GNN ($1$-$2$-$3$-GNN).
For the final classification and regression steps, we used a three layer MLP, with binary cross entropy and mean squared error for the optimization, respectively.
For classification we used a dropout layer with $p=0.5$ after the first layer of the MLP. 
We applied global average pooling to generate a vector representation of the graph from the computed node features for each $k$. 
The resulting vectors are concatenated column-wise before feeding them into the MLP. 
Moreover, we used the Adam optimizer with an initial learning rate of $10^{-2}$ and applied an adaptive learning rate decay based on validation results to a minimum of $10^{-5}$.  We trained the classification networks for $100$ epochs and the regression networks for $200$ epochs.

\subsection{Experimental Protocol} 

For the smaller datasets, which we use for comparison against the kernel methods, we performed a 10-fold cross validation where we randomly sampled 10\% of each training fold to act as a validation set.
For the \textsc{Qm9} dataset, we follow the dataset splits described in~\cite{Wu+2018}.
We randomly sampled 10\% of the examples for validation, another 10\% for testing, and used the remaining for training. We used the same initial node features as described in~\cite{Gil+2017}. Moreover, in order to illustrate the benefits of our hierarchical $k$-GNN architecture, we did not use a complete graph, where edges are annotated with pairwise distances, as input.
Instead, we only used pairwise Euclidean distances for connected nodes, computed from the provided node coordinates. The code was built upon the work of~\cite{Fey+2018} and is provided at~\url{https://github.com/chrsmrrs/k-gnn}.

\subsection{Results and Discussion}

In the following we answer questions \textbf {Q1} to \textbf{Q3}. \cref{fig:classification_results} shows the results for comparison with the kernel methods on the graph classification benchmark datasets. Here, the hierarchical $k$-GNN is on par with the kernels despite the small dataset sizes (answering question \textbf{Q1}).
We also find that the 1-2-3-GNN significantly outperforms the 1-GNN on all seven datasets (answering \textbf{Q2}), with the 1-GNN being the overall weakest method across all tasks.\footnote{Note that in very recent work, GNNs have shown superior results over kernels when using advanced pooling techniques~\cite{Yin+2018}. Note that our layers can be combined with these pooling layers. However, we opted to use standard global pooling in order to compare a typical GNN implementation with standard off-the-shelf kernels.}
We can further see that optimizing the parameters of the aggregation function only leads to slight performance gains on two out of three datasets, and that no optimization even achieves better results on the \textsc{Proteins} benchmark dataset (answering \textbf{Q3}).
We contribute this effect to the one-hot encoded node labels, which allow the GNN to gather enough information out of the neighborhood of a node, even when this aggregation is not learned.

\cref{fig:qm9_results} shows the results for the \textsc{Qm9} dataset. On eleven out of twelve targets all of our hierarchical variants beat the $1$-GNN baseline, providing further evidence for \textbf{Q2}. 
However, the additional structural information extracted by the $k$-GNN layers does not serve all tasks equally, leading to huge differences in gains across the targets.

It should be noted that our $k$-GNN models have more parameters than the $1$-GNN model, since we stack two additional GNN layers for each $k$. However, extending the $1$-GNN model by additional layers to match the number of parameters of the $k$-GNN did not lead to better results in any experiment.

\section{Conclusion}

We presented a theoretical investigation of GNNs, showing that a wide class of GNN architectures cannot be stronger than the $1$-WL. On the positive side, we showed that, in principle, GNNs possess the same power in terms of distinguishing between non-isomorphic (sub-)graphs, while having the added benefit of adapting to the given data distribution. Based on this insight, we proposed $k$-GNNs which are a generalization of GNNs based on the $k$-WL. This new model is strictly stronger then GNNs in terms of distinguishing non-isomorphic (sub-)graphs and is capable of distinguishing more graph properties. Moreover, we devised a hierarchical variant of $k$-GNNs, which can exploit the hierarchical organization of most real-world graphs. Our experimental study shows that $k$-GNNs consistently outperform $1$-GNNs. Future work includes designing task-specific $k$-GNNs, e.g., devising $k$-GNNs layers that exploit expert-knowledge in bio- and chemoinformatic settings. 

\section*{Acknowledgments}
This work is supported by the German research council (DFG) within the Research Training Group 2236 \emph{UnRAVeL} and the Collaborative Research Center
SFB 876, \emph{Providing Information by Resource-Constrained
	Analysis}, projects A6 and B2.

\fontsize{9.5pt}{10.5pt} \selectfont
\bibliographystyle{aaai}

\begin{thebibliography}{}
	
	\bibitem[\protect\citeauthoryear{Arvind \bgroup et al\mbox.\egroup
		}{2015}]{Arv+2015}
	Arvind, V.; K{\"{o}}bler, J.; Rattan, G.; and Verbitsky, O.
	\newblock 2015.
	\newblock On the power of color refinement.
	\newblock In {\em Symposium on Fundamentals of Computation Theory},  339--350.
	
	\bibitem[\protect\citeauthoryear{Babai and Kucera}{1979}]{Bab+1979}
	Babai, L., and Kucera, L.
	\newblock 1979.
	\newblock Canonical labelling of graphs in linear average time.
	\newblock In {\em Symposium on Foundations of Computer Science},  39--46.
	
	\bibitem[\protect\citeauthoryear{Borgwardt and Kriegel}{2005}]{Borgwardt2005}
	Borgwardt, K.~M., and Kriegel, H.-P.
	\newblock 2005.
	\newblock Shortest-path kernels on graphs.
	\newblock In {\em ICDM},  74--81.
	
	\bibitem[\protect\citeauthoryear{Bruna \bgroup et al\mbox.\egroup
		}{2014}]{Bru+2014}
	Bruna, J.; Zaremba, W.; Szlam, A.; and LeCun, Y.
	\newblock 2014.
	\newblock Spectral networks and deep locally connected networks on graphs.
	\newblock In {\em ICLR}.
	
	\bibitem[\protect\citeauthoryear{Cai, F{\"{u}}rer, and
		Immerman}{1992}]{Cai+1992}
	Cai, J.; F{\"{u}}rer, M.; and Immerman, N.
	\newblock 1992.
	\newblock An optimal lower bound on the number of variables for graph
	identifications.
	\newblock {\em Combinatorica} 12(4):389--410.
	
	\bibitem[\protect\citeauthoryear{Chang and Lin}{2011}]{Cha+2011}
	Chang, C.-C., and Lin, C.-J.
	\newblock 2011.
	\newblock {{LIBSVM}}: {A} library for support vector machines.
	\newblock {\em ACM Transactions on Intelligent Systems and Technology}
	2:27:1--27:27.
	
	\bibitem[\protect\citeauthoryear{Defferrard, X., and
		Vandergheynst}{2016}]{Def+2015}
	Defferrard, M.; X., B.; and Vandergheynst, P.
	\newblock 2016.
	\newblock Convolutional neural networks on graphs with fast localized spectral
	filtering.
	\newblock In {\em NIPS},  3844--3852.
	
	\bibitem[\protect\citeauthoryear{Duvenaud \bgroup et al\mbox.\egroup
		}{2015}]{Duv+2015}
	Duvenaud, D.~K.; Maclaurin, D.; Iparraguirre, J.; Bombarell, R.; Hirzel, T.;
	Aspuru-Guzik, A.; and Adams, R.~P.
	\newblock 2015.
	\newblock Convolutional networks on graphs for learning molecular fingerprints.
	\newblock In {\em NIPS},  2224--2232.
	
	\bibitem[\protect\citeauthoryear{Fey \bgroup et al\mbox.\egroup
		}{2018}]{Fey+2018}
	Fey, M.; Lenssen, J.~E.; Weichert, F.; and M{\"u}ller, H.
	\newblock 2018.
	\newblock {SplineCNN}: Fast geometric deep learning with continuous {B}-spline
	kernels.
	\newblock In {\em CVPR}.
	
	\bibitem[\protect\citeauthoryear{Fout \bgroup et al\mbox.\egroup
		}{2017}]{Fou+2017}
	Fout, A.; Byrd, J.; Shariat, B.; and Ben-Hur, A.
	\newblock 2017.
	\newblock Protein interface prediction using graph convolutional networks.
	\newblock In {\em NIPS},  6533--6542.
	
	\bibitem[\protect\citeauthoryear{G\"{a}rtner, Flach, and
		Wrobel}{2003}]{Gae+2003}
	G\"{a}rtner, T.; Flach, P.; and Wrobel, S.
	\newblock 2003.
	\newblock On graph kernels: Hardness results and efficient alternatives.
	\newblock In {\em Learning Theory and Kernel Machines}.
	\newblock  129--143.
	
	\bibitem[\protect\citeauthoryear{Gilmer \bgroup et al\mbox.\egroup
		}{2017}]{Gil+2017}
	Gilmer, J.; Schoenholz, S.~S.; Riley, P.~F.; Vinyals, O.; and Dahl, G.~E.
	\newblock 2017.
	\newblock Neural message passing for quantum chemistry.
	\newblock In {\em ICML}.
	
	\bibitem[\protect\citeauthoryear{Grohe and Otto}{2015}]{GroheO15}
	Grohe, M., and Otto, M.
	\newblock 2015.
	\newblock Pebble games and linear equations.
	\newblock {\em Journal of Symbolic Logic} 80(3):797--844.
	
	\bibitem[\protect\citeauthoryear{Grohe}{2017}]{Gro2017}
	Grohe, M.
	\newblock 2017.
	\newblock {\em Descriptive Complexity, Canonisation, and Definable Graph
	Structure Theory}.
	\newblock Lecture Notes in Logic. Cambridge University Press.
	
	\bibitem[\protect\citeauthoryear{Hamilton, Ying, and
		Leskovec}{2017a}]{Ham+2017}
	Hamilton, W.~L.; Ying, R.; and Leskovec, J.
	\newblock 2017a.
	\newblock Inductive representation learning on large graphs.
	\newblock In {\em NIPS},  1025--1035.
	
	\bibitem[\protect\citeauthoryear{Hamilton, Ying, and
		Leskovec}{2017b}]{Ham+2017a}
	Hamilton, W.~L.; Ying, R.; and Leskovec, J.
	\newblock 2017b.
	\newblock Representation learning on graphs: Methods and applications.
	\newblock {\em {IEEE} Data Engineering Bulletin} 40(3):52--74.
	
	\bibitem[\protect\citeauthoryear{Johansson and Dubhashi}{2015}]{Joh+2015}
	Johansson, F.~D., and Dubhashi, D.
	\newblock 2015.
	\newblock Learning with similarity functions on graphs using matchings of
	geometric embeddings.
	\newblock In {\em KDD},  467--476.
	
	\bibitem[\protect\citeauthoryear{Kashima, Tsuda, and Inokuchi}{2003}]{Kas+2003}
	Kashima, H.; Tsuda, K.; and Inokuchi, A.
	\newblock 2003.
	\newblock Marginalized kernels between labeled graphs.
	\newblock In {\em ICML},  321--328.
	
	\bibitem[\protect\citeauthoryear{Kersting \bgroup et al\mbox.\egroup
		}{2016}]{KKMMN2016}
	Kersting, K.; Kriege, N.~M.; Morris, C.; Mutzel, P.; and Neumann, M.
	\newblock 2016.
	\newblock Benchmark data sets for graph kernels.
	
	\bibitem[\protect\citeauthoryear{Kiefer, Schweitzer, and
		Selman}{2015}]{kiefer2015graphs}
	Kiefer, S.; Schweitzer, P.; and Selman, E.
	\newblock 2015.
	\newblock Graphs identified by logics with counting.
	\newblock In {\em MFCS},  319--330.
	\newblock Springer.
	
	\bibitem[\protect\citeauthoryear{Kipf and Welling}{2017}]{Kip+2017}
	Kipf, T.~N., and Welling, M.
	\newblock 2017.
	\newblock Semi-supervised classification with graph convolutional networks.
	\newblock In {\em ICLR}.
	
	\bibitem[\protect\citeauthoryear{Kondor and Pan}{2016}]{Kon+2016}
	Kondor, R., and Pan, H.
	\newblock 2016.
	\newblock The multiscale laplacian graph kernel.
	\newblock In {\em NIPS},  2982--2990.
	
	\bibitem[\protect\citeauthoryear{Kriege, Giscard, and Wilson}{2016}]{Kri+2016}
	Kriege, N.~M.; Giscard, P.-L.; and Wilson, R.~C.
	\newblock 2016.
	\newblock On valid optimal assignment kernels and applications to graph
	classification.
	\newblock In {\em NIPS},  1615--1623.
	
	\bibitem[\protect\citeauthoryear{Lei \bgroup et al\mbox.\egroup
		}{2017}]{Lei+2017}
	Lei, T.; Jin, W.; Barzilay, R.; and Jaakkola, T.~S.
	\newblock 2017.
	\newblock Deriving neural architectures from sequence and graph kernels.
	\newblock In {\em ICML},  2024--2033.
	
	\bibitem[\protect\citeauthoryear{Li \bgroup et al\mbox.\egroup
		}{2016}]{Li+2016}
	Li, W.; Saidi, H.; Sanchez, H.; Sch{\"{a}}f, M.; and Schweitzer, P.
	\newblock 2016.
	\newblock Detecting similar programs via the {W}eisfeiler-{L}eman graph kernel.
	\newblock In {\em International Conference on Software Reuse},  315--330.
	
	\bibitem[\protect\citeauthoryear{Li, Han, and Wu}{2018}]{Li+2018a}
	Li, Q.; Han, Z.; and Wu, X.-M.
	\newblock 2018.
	\newblock Deeper insights into graph convolutional networks for semi-supervised
	learning.
	\newblock In {\em AAAI},  3538--3545.
	
	\bibitem[\protect\citeauthoryear{Merkwirth and Lengauer}{2005}]{Mer+2005}
	Merkwirth, C., and Lengauer, T.
	\newblock 2005.
	\newblock Automatic generation of complementary descriptors with molecular
	graph networks.
	\newblock {\em Journal of Chemical Information and Modeling} 45(5):1159--1168.
	
	\bibitem[\protect\citeauthoryear{Milo \bgroup et al\mbox.\egroup
		}{2002}]{Mil+2002}
	Milo, R.; Shen-Orr, S.; Itzkovitz, S.; Kashtan, N.; Chklovskii, D.; and Alon,
	U.
	\newblock 2002.
	\newblock Network motifs: simple building blocks of complex networks.
	\newblock {\em Science} 298(5594):824--827.
	
	\bibitem[\protect\citeauthoryear{Morris, Kersting, and Mutzel}{2017}]{Mor+2017}
	Morris, C.; Kersting, K.; and Mutzel, P.
	\newblock 2017.
	\newblock Glocalized {W}eisfeiler-{L}ehman kernels: Global-local feature maps
	of graphs.
	\newblock In {\em ICDM},  327--336.
	
	\bibitem[\protect\citeauthoryear{Newman}{2003}]{New2003}
	Newman, M. E.~J.
	\newblock 2003.
	\newblock The structure and function of complex networks.
	\newblock {\em SIAM review} 45(2):167--256.
	
	\bibitem[\protect\citeauthoryear{Niepert, Ahmed, and Kutzkov}{2016}]{Nie+2016}
	Niepert, M.; Ahmed, M.; and Kutzkov, K.
	\newblock 2016.
	\newblock Learning convolutional neural networks for graphs.
	\newblock In {\em ICML},  2014--2023.
	
	\bibitem[\protect\citeauthoryear{Nikolentzos \bgroup et al\mbox.\egroup
		}{2018}]{Nik+2018}
	Nikolentzos, G.; Meladianos, P.; Limnios, S.; and Vazirgiannis, M.
	\newblock 2018.
	\newblock A degeneracy framework for graph similarity.
	\newblock In {\em IJCAI},  2595--2601.
	
	\bibitem[\protect\citeauthoryear{Nikolentzos, Meladianos, and
		Vazirgiannis}{2017}]{Nik+2017}
	Nikolentzos, G.; Meladianos, P.; and Vazirgiannis, M.
	\newblock 2017.
	\newblock Matching node embeddings for graph similarity.
	\newblock In {\em AAAI},  2429--2435.
	
	\bibitem[\protect\citeauthoryear{Ramakrishnan \bgroup et al\mbox.\egroup
		}{2014}]{Ram+2014}
	Ramakrishnan, R.; Dral, P., O.; Rupp, M.; and von Lilienfeld, O.~A.
	\newblock 2014.
	\newblock Quantum chemistry structures and properties of 134 kilo molecules.
	\newblock {\em Scientific Data} 1.
	
	\bibitem[\protect\citeauthoryear{Ruddigkeit \bgroup et al\mbox.\egroup
		}{2012}]{Rud+2012}
	Ruddigkeit, L.; van Deursen, R.; Blum, L.~C.; and Reymond, J.-L.
	\newblock 2012.
	\newblock Enumeration of 166 billion organic small molecules in the chemical
	universe database gdb-17.
	\newblock {\em Journal of Chemical Information and Modeling} 52 11:2864--75.
	
	\bibitem[\protect\citeauthoryear{Scarselli \bgroup et al\mbox.\egroup
		}{2009a}]{Sca+2009a}
	Scarselli, F.; Gori, M.; Tsoi, A.~C.; Hagenbuchner, M.; and Monfardini, G.
	\newblock 2009a.
	\newblock Computational capabilities of graph neural networks.
	\newblock {\em IEEE Transactions on Neural Networks} 20(1):81--102.
	
	\bibitem[\protect\citeauthoryear{Scarselli \bgroup et al\mbox.\egroup
		}{2009b}]{Sca+2009}
	Scarselli, F.; Gori, M.; Tsoi, A.~C.; Hagenbuchner, M.; and Monfardini, G.
	\newblock 2009b.
	\newblock The graph neural network model.
	\newblock {\em IEEE Transactions on Neural Networks} 20(1):61--80.
	
	\bibitem[\protect\citeauthoryear{Sch{\"{u}}tt \bgroup et al\mbox.\egroup
		}{2017}]{Sch+2017}
	Sch{\"{u}}tt, K.; Kindermans, P.~J.; Sauceda, H.~E.; Chmiela, S.; Tkatchenko,
	A.; and M{\"{u}}ller, K.~R.
	\newblock 2017.
	\newblock {SchNet}: A continuous-filter convolutional neural network for
	modeling quantum interactions.
	\newblock In {\em NIPS},  992--1002.
	
	\bibitem[\protect\citeauthoryear{Shervashidze \bgroup et al\mbox.\egroup
		}{2009}]{She+2009}
	Shervashidze, N.; Vishwanathan, S.~V.~N.; Petri, T.~H.; Mehlhorn, K.; and
	Borgwardt, K.~M.
	\newblock 2009.
	\newblock Efficient graphlet kernels for large graph comparison.
	\newblock In {\em AISTATS},  488--495.
	
	\bibitem[\protect\citeauthoryear{Shervashidze \bgroup et al\mbox.\egroup
		}{2011}]{She+2011}
	Shervashidze, N.; Schweitzer, P.; van Leeuwen, E.~J.; Mehlhorn, K.; and
	Borgwardt, K.~M.
	\newblock 2011.
	\newblock Weisfeiler-{L}ehman graph kernels.
	\newblock {\em JMLR} 12:2539--2561.
	
	\bibitem[\protect\citeauthoryear{Vishwanathan \bgroup et al\mbox.\egroup
		}{2010}]{Vis+2010}
	Vishwanathan, S. V.~N.; Schraudolph, N.~N.; Kondor, R.; and Borgwardt, K.~M.
	\newblock 2010.
	\newblock Graph kernels.
	\newblock {\em JMLR} 11:1201--1242.
	
	\bibitem[\protect\citeauthoryear{Wang \bgroup et al\mbox.\egroup
		}{2018}]{Wang2018}
	Wang, Y.; Sun, Y.; Liu, Z.; Sarma, S.~E.; Bronstein, M.~M.; and Solomon, J.~M.
	\newblock 2018.
	\newblock Dynamic graph {{CNN}} for learning on point clouds.
	\newblock {\em CoRR} abs/1801.07829.
	
	\bibitem[\protect\citeauthoryear{Wu \bgroup et al\mbox.\egroup
		}{2018}]{Wu+2018}
	Wu, Z.; Ramsundar, B.; Feinberg, E.~N.; Gomes, J.; Geniesse, C.; Pappu, A.~S.;
	Leswing, K.; and Pande, V.
	\newblock 2018.
	\newblock Moleculenet: a benchmark for molecular machine learning.
	\newblock {\em Chemical Science} 9:513--530.
	
	\bibitem[\protect\citeauthoryear{Xu \bgroup et al\mbox.\egroup
		}{2018}]{Xu+2018}
	Xu, K.; Li, C.; Tian, Y.; Sonobe, T.; Kawarabayashi, K.-i.; and Jegelka, S.
	\newblock 2018.
	\newblock Representation learning on graphs with jumping knowledge networks.
	\newblock In {\em ICML},  5453--5462.
	
	\bibitem[\protect\citeauthoryear{Yanardag and Vishwanathan}{2015a}]{Yan+2015}
	Yanardag, P., and Vishwanathan, S. V.~N.
	\newblock 2015a.
	\newblock Deep graph kernels.
	\newblock In {\em KDD},  1365--1374.
	
	\bibitem[\protect\citeauthoryear{Yanardag and Vishwanathan}{2015b}]{Yan+2015a}
	Yanardag, P., and Vishwanathan, S. V.~N.
	\newblock 2015b.
	\newblock A structural smoothing framework for robust graph comparison.
	\newblock In {\em NIPS},  2134--2142.
	
	\bibitem[\protect\citeauthoryear{Ying \bgroup et al\mbox.\egroup
		}{2018a}]{Yin+2018a}
	Ying, R.; He, R.; Chen, K.; Eksombatchai, P.; Hamilton, W.~L.; and Leskovec, J.
	\newblock 2018a.
	\newblock Graph convolutional neural networks for web-scale recommender
	systems.
	\newblock {\em KDD}.
	
	\bibitem[\protect\citeauthoryear{Ying \bgroup et al\mbox.\egroup
		}{2018b}]{Yin+2018}
	Ying, R.; You, J.; Morris, C.; Ren, X.; Hamilton, W.~L.; and Leskovec, J.
	\newblock 2018b.
	\newblock Hierarchical graph representation learning with differentiable
	pooling.
	\newblock In {\em NIPS}.
	
	\bibitem[\protect\citeauthoryear{Zhang \bgroup et al\mbox.\egroup
		}{2018}]{Zha+2018}
	Zhang, M.; Cui, Z.; Neumann, M.; and Yixin, C.
	\newblock 2018.
	\newblock An end-to-end deep learning architecture for graph classification.
	\newblock In {\em AAAI},  4428--4435.
	
\end{thebibliography}

\onecolumn

\fontsize{11.0pt}{14.0pt} \selectfont

\section{\LARGE Appendix}

In the following we provide proofs for Theorem 1, Theorem 2, Proposition 3, and Proposition 4.

\section{Proof of Theorem 1}

\begin{theorem}[Theorem 1 in the main paper]\label{thm:refine:restated}
	Let $(G, l)$ be a labeled graph. Then for all $t\ge 0$ and for all choices of initial colorings $f^{(0)}$ consistent with $l$, and weights $\mathbf{W}^{(t)}$,
	\begin{equation*}\label{refine}
		c^{(t)}_l \sqsubseteq f^{(t)}\,.
	\end{equation*}
\end{theorem}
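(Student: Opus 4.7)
The plan is to prove this by induction on the iteration/layer index $t$, exploiting the bijective nature of the WL $\textsc{hash}$ on one side and the permutation-invariance of the GNN aggregation on the other. Throughout, recall that $c \sqsubseteq f$ means that the GNN coloring is at most as discriminating as the WL coloring, so it suffices to show: whenever $c^{(t)}_l(v)=c^{(t)}_l(w)$, one has $f^{(t)}(v)=f^{(t)}(w)$.

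For the base case $t=0$, the statement is immediate: by definition $c^{(0)}_l=l$, and $f^{(0)}$ is assumed consistent with $l$, so $c^{(0)}_l(v)=c^{(0)}_l(w)$ forces $l(v)=l(w)$, hence $f^{(0)}(v)=f^{(0)}(w)$. For the inductive step, assume $c^{(t-1)}_l \sqsubseteq f^{(t-1)}$ and pick $v,w$ with $c^{(t)}_l(v)=c^{(t)}_l(w)$. Because the $\textsc{hash}$ in \eqref{eq:wlColoring} is injective, this equality unpacks into two facts: (i) $c^{(t-1)}_l(v)=c^{(t-1)}_l(w)$, and (ii) the neighborhood multisets agree,
\[
\oms c^{(t-1)}_l(u) \mid u\in N(v)\cms \;=\; \oms c^{(t-1)}_l(u) \mid u\in N(w)\cms .
\]

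From (i) and the induction hypothesis one immediately gets $f^{(t-1)}(v)=f^{(t-1)}(w)$. The key step is to upgrade (ii) to the analogous multiset equality for the GNN features. Since the two color multisets coincide, there is a bijection $\pi\colon N(v)\to N(w)$ with $c^{(t-1)}_l(u)=c^{(t-1)}_l(\pi(u))$ for every $u$; applying the induction hypothesis pointwise along $\pi$ yields $f^{(t-1)}(u)=f^{(t-1)}(\pi(u))$, and hence
\[
\oms f^{(t-1)}(u)\mid u\in N(v)\cms \;=\; \oms f^{(t-1)}(u)\mid u\in N(w)\cms .
\]
Because $f^{W_2}_{\text{aggr}}$ in \eqref{eq:gnngeneral} is permutation-invariant, its value on these two multisets is the same, and because $f^{W_1}_{\text{merge}}$ is a deterministic function of its two inputs, combining this with the equality $f^{(t-1)}(v)=f^{(t-1)}(w)$ shown above gives $f^{(t)}(v)=f^{(t)}(w)$. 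The same argument specializes to the basic architecture \eqref{eq:basicgnn}, since the sum is clearly permutation-invariant and the outer map $x\mapsto \sigma(x\cdot W_1^{(t)} + y\cdot W_2^{(t)})$ is well-defined.

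The only nontrivial point is step (ii), namely extracting a bijection that matches neighborhoods color-for-color and then pushing the inductive hypothesis through it; everything else is a direct unfolding of the two update rules. No assumption on the weights $\mathbf{W}^{(t)}$ or on the specific form of $\sigma$ is needed, which is exactly why the conclusion holds for \emph{every} choice of parameters, consistent with the statement of the theorem.
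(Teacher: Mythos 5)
Your proposal is correct and follows essentially the same route as the paper's own proof: induction on $t$, using consistency of $f^{(0)}$ with $l$ for the base case, then unpacking the injective $\textsc{hash}$ to get equality of previous colors and neighborhood color multisets, pushing the induction hypothesis through to obtain equality of the feature multisets, and concluding via permutation-invariance of $f^{W_2}_{\text{aggr}}$ and determinism of $f^{W_1}_{\text{merge}}$. The only difference is that you make the color-preserving bijection $\pi\colon N(v)\to N(w)$ explicit, which the paper leaves implicit; this is a welcome clarification rather than a deviation.
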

For the theorem we consider a single iteration of the $1$-WL algorithm and the GNN on a single graph.
\begin{proof}[Proof of Theorem 1]
	We show for an arbitrary iteration $t$ and nodes $u,v\in V(G)$, that $c_l^{(t+1)}(u) = c_l^{(t+1)}(v)$ implies $f^{(t+1)}(u)=f^{(t+1)}(v)$. In iteration $0$ we have $c_l^{(0)}(u)=c_l^{(0)}(v) \Longleftrightarrow f^{(0)}(u)=f^{(0)}(v)$ as the initial node coloring $f^{(0)}$ is chosen consistent with $l$.
				
	Let $u,v\in V(G)$ and $t\in \NN$ such that $c_l^{(t+1)}(u) = c_l^{(t+1)}(v)$.
	Assume for the induction that $c_l^{(t)}(u)=c_l^{(t)}(v) \Longrightarrow f^{(t)}(u)=f^{(t)}(v)$ holds.
	As $c_l^{(t+1)}(u)= c_l^{(t+1)}(v)$ we know from the refinement step of the $1$-WL that the old colors $c_l^{(t)}(u) = c_l^{(t)}(v)$ of $u$ and $v$ as well as the multisets $\oms c_l^{(t)}(w) \mid w \in N(u) \cms$ and $\oms c_l^{(t)}(w) \mid w \in N(v) \cms$ of colors of the neighbors of $u$ and $v$ are identical.
				
	Let $M_u = \oms f^{(t)}(w) \mid w \in N(u) \cms$ and $M_v = \oms f^{(t)}(v) \mid w \in N(v) \cms$ be the multisets of feature vectors of the neighbors of $u$ and $v$ respectively. By the induction hypothesis, we know that $M_u = M_v$ and $f^{(t)}(u) = f^{(t)}(v)$ such that independent of the choice of $f_\text{merge}$ and $f_\text{aggr}$ we get $f^{(t+1)}(u) = f^{(t+1)}(v)$. This holds as the input to both functions $f_\text{merge}$ and $f_\text{aggr}$ is identical. This proves $c_l^{(t+1)}(u) = c_l^{(t+1)}(v) \Longrightarrow f^{(t+1)}(u)=f^{(t+1)}(v)$ and thereby the theorem.
\end{proof}

\section{Proof of Theorem 2}

\begin{theorem}[Theorem 2 in the main paper]\label{equal:restated}
	Let $(G, l)$ be a labeled graph. Then for all $t\geq 0$ there exists a sequence of weights $\mathbf{W}^{(t)},$ and a $1$-GNN architecture such that 
	\begin{equation*}
		c^{(t)}_l \equiv f^{(t)}\,.
	\end{equation*}
\end{theorem}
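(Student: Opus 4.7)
The plan is to complement Theorem 1 with an explicit weight construction. Since Theorem 1 gives $c^{(t)}_l \sqsubseteq f^{(t)}$ for \emph{any} choice of weights, it suffices to exhibit weights $\mathbf{W}^{(t)}$ and an initial embedding $f^{(0)}$ for which also $f^{(t)} \sqsubseteq c^{(t)}_l$ holds; combining the two refinements then yields $c^{(t)}_l \equiv f^{(t)}$. I would set this up as an induction on $t$, maintaining the stronger invariant that, up to a change of basis, $f^{(t)}$ can be taken to assign the standard basis vector $e_{c^{(t)}_l(v)}$ to each node $v$, so that distinct $1$-WL colors correspond to distinct (in fact linearly independent) feature vectors.

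For the base case, encode each label in $\Sigma$ as a distinct standard basis vector; this $f^{(0)}$ is consistent with $l$ and satisfies $f^{(0)} \equiv c^{(0)}_l$ by construction. For the inductive step, assume $f^{(t)}(v) = e_{c^{(t)}_l(v)} \in \mathbb{R}^{K_t}$, where $K_t$ is the number of $1$-WL colors at iteration $t$. Then $\sum_{w \in N(v)} f^{(t)}(w)$ is exactly the histogram of the multiset $\oms c^{(t)}_l(w) \mid w \in N(v) \cms$, and two nodes agree on this sum if and only if the corresponding multisets coincide. I would then take the output dimension of the layer to be $2K_t$ and set $W^{(t+1)}_1 = [I_{K_t} \mid 0]$ and $W^{(t+1)}_2 = [0 \mid I_{K_t}]$, so that the preactivation $f^{(t)}(v) W^{(t+1)}_1 + \sum_{w \in N(v)} f^{(t)}(w) W^{(t+1)}_2$ stacks $f^{(t)}(v)$ and the neighborhood histogram as the two halves of a vector in $\mathbb{R}^{2K_t}$. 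This map is injective on equivalence classes of the pairs $(c^{(t)}_l(v),\, \oms c^{(t)}_l(w) \mid w \in N(v) \cms)$, which are precisely the arguments merged by the $1$-WL hash in \eqref{eq:wlColoring}. Since all entries are non-negative, applying a componentwise sigmoid (or any componentwise injective $\sigma$) preserves distinctness, so nodes with distinct $c^{(t+1)}_l$ receive distinct $f^{(t+1)}$.

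A small bookkeeping step is needed to carry the one-hot invariant into the next iteration: after the distinguishing layer above, each distinct value of $f^{(t+1)}$ corresponds to a unique new color, so a linear change-of-basis into $\mathbb{R}^{K_{t+1}}$ can be absorbed into the next layer's $W^{(t+2)}_1, W^{(t+2)}_2$. I expect the main obstacle to be precisely this interplay between injectivity and the rigid update form $\sigma\bigl(f^{(t)}(v) W_1 + \sum_{w \in N(v)} f^{(t)}(w) W_2\bigr)$: the weights must simultaneously distinguish new $1$-WL colors, remain permutation-invariant over the neighborhood, and leave the representation in a form that the next layer can again analyze cleanly. The block construction above handles all three, but any attempt to avoid the explicit re-basing step would require a more delicate argument about when the self-color contribution and the neighborhood-color contributions can collide after the nonlinearity is applied.
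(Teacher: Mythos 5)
Your overall strategy (invoke Theorem~1 for one direction, construct weights for the reverse refinement by induction) matches the paper's, and the observation that a one-hot encoding of colors turns the neighborhood sum into an exact histogram of neighbor colors is the right starting point. However, there is a genuine gap in the step where you restore the one-hot invariant. After your distinguishing layer, the feature of $v$ is $\sigma\bigl([\,e_{c^{(t)}_l(v)},\, h_v\,]\bigr)\in\mathbb{R}^{2K_t}$, and you claim that since distinct values correspond to distinct new colors, ``a linear change-of-basis into $\mathbb{R}^{K_{t+1}}$ can be absorbed into the next layer's $W^{(t+2)}_1, W^{(t+2)}_2$.'' A linear map can send a family of distinct vectors to the (linearly independent) standard basis vectors only if that family is itself linearly independent, and your construction does not guarantee this. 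Already for an unlabeled graph with three distinct degrees $d_1,d_2,d_3$ and $K_0=1$, the first layer produces the three vectors $\bigl(\sigma(1),\sigma(d_i)\bigr)\in\mathbb{R}^2$, which are necessarily linearly dependent; no linear re-basing to $e_1,e_2,e_3$ exists, and no choice of output dimension helps, since a linear map preserves linear dependence. The danger is not hypothetical: if colors $a,b,c$ were represented by vectors with $x_a+x_c=2x_b$, a node with two $b$-neighbors and a node with one $a$- and one $c$-neighbor would receive identical neighborhood sums at the next layer, breaking the induction.

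This is precisely the point where the paper does the real work. It maintains the invariant that the feature matrix is ``row-independent modulo equality'' (distinct colors get linearly independent rows), and its Lemma on $\sgn(BX-J)$ shows how to choose the weights so that the \emph{nonlinearity itself} manufactures linear independence: the distinct integer histogram rows are collapsed to distinct scalars via a base-$n$ encoding, thresholded against carefully ordered values $x_1,\dots,x_s$, and the sign function then yields a triangular $\pm1$ matrix whose rows are linearly independent. (The paper also sidesteps your block construction for the self-term by proving that, once nodes are distinguished, they stay distinguished, so the self-color can be dropped in the uncolored case and handled by concatenating the initial coloring in the colored case.) To repair your proof you would need to replace the ``linear change-of-basis'' step with an argument of this kind, i.e., show that the composition of a suitable linear map with the activation sends the finitely many distinct class representatives to linearly independent vectors; with a generic componentwise sigmoid and identity-block weights this does not hold.
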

For the proof we start by giving the proof for graphs where all nodes have the same initial color and then extend it to colored graphs. 
In order to do that we use a slightly adapted but equivalent version of the $1$-WL.
Note that the extension to colored graphs is mostly technical, while the important idea is already contained in the first case.

\subsection{Uncolored Graphs}
Let $\Gamma_G$ be the refinement operator for the $1$-WL, mapping the old coloring $c_{l}^{(t-1)}$ to the updated one $c_{l}^{(t)}$:
\begin{equation*}\label{eq:wlColoring}
	c_{l}^{(t)}(v) = \left(\Gamma_G\big(c_{l}^{(t-1)}\big)\right)(v)= \textsc{hash}\Big(\! \big(c_l^{(t-1)}(v),\oms c_l^{(t-1)}(u)\!\mid\!u \in N(v) \cms \big)\! \Big)\,.
\end{equation*}
We first show that for uncolored graphs this is equivalent to the update rule $\tilde\Gamma_G$:
\begin{equation*}\label{eq:wlColoring}
	\tilde c_{l}^{(t)}(v) = \left(\tilde\Gamma_G\big(\tilde c_{l}^{(t-1)}\big)\right)(v)= \textsc{hash}\Big(\! \big(\oms c_l^{(t-1)}(u)\!\mid\!u \in N(v) \cms \big)\! \Big)\,.
\end{equation*}
We denote $J$ as the all-$1$ matrix where the size will always be clear from the context.

\begin{lemma}\label{lem:1}
	Let $G$ be a graph, $v,w\in V(G)$, and $t\in\NN$ such that
	$\tilde c^{(t)}(u) \neq \tilde c^{(t)}(v)$.
	Then $\tilde c^{(t')}(u) \neq \tilde c^{(t')}(v)$ for all $t'\geq t$.
\end{lemma}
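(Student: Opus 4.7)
The plan is to prove the stronger one-step monotonicity statement that, for every $t\geq 0$, the partition of $V(G)$ induced by $\tilde c^{(t+1)}$ refines the partition induced by $\tilde c^{(t)}$; the lemma then follows by contraposition and iteration from $t$ up to any $t'\geq t$.

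I would proceed by induction on $t$. The base case $t=0$ is immediate: since we are in the uncolored setting, $\tilde c^{(0)}$ is the constant coloring, and every coloring refines a constant one.

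For the inductive step, I would rephrase the induction hypothesis ($\tilde c^{(t)}$ refines $\tilde c^{(t-1)}$) as the existence of a well-defined map $g$ on the image of $\tilde c^{(t)}$ satisfying $g(\tilde c^{(t)}(w)) = \tilde c^{(t-1)}(w)$ for every $w\in V(G)$. Given two nodes $u,v$ with $\tilde c^{(t+1)}(u) = \tilde c^{(t+1)}(v)$, unfolding the definition of $\tilde\Gamma_G$ yields the multiset equality $\oms \tilde c^{(t)}(w) \mid w\in N(u)\cms = \oms \tilde c^{(t)}(w) \mid w\in N(v)\cms$. Applying $g$ componentwise preserves the equality and produces $\oms \tilde c^{(t-1)}(w) \mid w\in N(u)\cms = \oms \tilde c^{(t-1)}(w) \mid w\in N(v)\cms$, which by definition of $\tilde\Gamma_G$ is exactly $\tilde c^{(t)}(u) = \tilde c^{(t)}(v)$. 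This closes the induction, and iterating the contrapositive gives the lemma.

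The main point of care is that, unlike the standard $1$-WL update $\Gamma_G$, the operator $\tilde\Gamma_G$ discards a node's own previous color, so monotonicity is not automatic. The induction routes around this by using the previous-step refinement to lift a color coincidence at step $t+1$ back to step $t$ via the componentwise action of $g$ on multisets, and this is exactly where the uncolored assumption enters through the base case; for labeled graphs monotonicity can genuinely fail (e.g., a $4$-cycle with opposite labels on adjacent nodes collapses at step $1$), which is presumably why the authors isolate the uncolored case first before passing to colored graphs.
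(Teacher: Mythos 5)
Your proof is correct and is essentially the paper's own argument: the paper phrases it as a minimal-counterexample descent and then counts neighbors in color classes via $|N_G(v)\cap P_i| = \sum_k |N_G(v)\cap Q_{j_k}|$, which is exactly your componentwise application of the refinement map $g$ to the neighborhood multisets, with your forward induction on one-step monotonicity playing the role of their minimality argument. One small aside: your parenthetical $4$-cycle example does not actually witness failure in the labeled case (the alternating label partition is preserved by $\tilde\Gamma_G$, only the colors swap); a genuine counterexample is, e.g., a star whose leaves carry distinct labels, where the leaves are merged after one step.
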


\begin{proof}
	Let $t\in\NN$ be minimal such that there are $v,w$ with
	\begin{align}
		\label{eq:2}                               
		\tilde c^{(t)}(u)\neq\tilde c^{(t)}(v),    
		\intertext{and}                            
		\label{eq:1}                               
		\tilde c^{(t+1)}(u)=\tilde c^{(t+1)}(v)\,. 
	\end{align}
	Then $t\ge 1$, because $\tilde c^{(0)}=J$ as there are no initial colors.
	Let $P_1,\ldots,P_p$ be the color classes of $\tilde c^{(t-1)}$. That
	is, for all $x,y\in V(G)$ we have
	$\tilde c^{(t-1)}(x)=\tilde c^{(t-1)}(y)$ if any only if there is an
	$i\in[1:p]$ such that $x,y\in P_i$. 
	Similarly, let $Q_1,\ldots,Q_q$ be the color classes of $\tilde c^{(t)}$. 
	Observe that the partition $\{Q_1,\ldots,Q_q\}$ of $V(G)$ refines the partition $\{P_1,\ldots,P_p\}$. 
	Indeed, if there were $i\neq i'\in [1\!\!:\!\!p]$, $k\in[1\!\!:\!\!q]$ such that $P_i\cap Q_k\neq\emptyset$ and $P_{i'}\cap Q_k\neq\emptyset$, then all $x\in P_i\cap Q_k$, $y\in P_{i'}\cap Q_k$ would satisfy $\tilde c^{(t-1)}(x)\neq\tilde c^{(t-1)}(y)$ and $\tilde c^{(t)}(x)=\tilde c^{(t)}(y)$, contradicting the minimality of $q$.
				
	Choose $v,w\in V(G)$ satisfying \eqref{eq:2} and \eqref{eq:1}. 
	By \eqref{eq:2}, there is an $i\in[1\!\!:\!\!p]$ such that $|N_G(v)\cap P_i|\neq |N_G(w)\cap P_i|$. 
	Let $j_1,\ldots,j_\ell\in[1\!\!:\!\!q]$ such that $P_i=Q_{j_1}\cup\ldots\cup Q_{j_\ell}$. 
	By \eqref{eq:1}, for all $k\in[1:\ell]$ we have $|N_G(v)\cap Q_{j_k}|= |N_G(w)\cap Q_{j_k}|$. 
	As the $Q_j$ are disjoint, this implies $|N_G(v)\cap P_i|=|N_G(w)\cap P_i|$ , which is a contradiction.
\end{proof}
Hence, the two update rules are equivalent.
\begin{corollary}
	For all $G$ and all $t\in\NN$ we have $\tilde c^{(t)}\equiv c^{(t)}$.
\end{corollary}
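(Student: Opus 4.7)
The plan is to prove the corollary by induction on $t$, using Lemma 1 as the crucial ingredient to convert information about the update step into information about the previous coloring. Note that since the context is uncolored graphs, we have $c^{(0)} \equiv \tilde c^{(0)}$ trivially (both are the constant coloring $J$), giving the base case.

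For the inductive step, assume $\tilde c^{(t)} \equiv c^{(t)}$; we show $\tilde c^{(t+1)} \equiv c^{(t+1)}$, which requires both refinement directions. For $c^{(t+1)} \sqsubseteq \tilde c^{(t+1)}$: if $c^{(t+1)}(v) = c^{(t+1)}(w)$, then by injectivity of \textsc{hash} we have $c^{(t)}(v) = c^{(t)}(w)$ and $\oms c^{(t)}(u) \mid u \in N(v) \cms = \oms c^{(t)}(u) \mid u \in N(w) \cms$. By the inductive hypothesis, $\tilde c^{(t)}$ induces the same partition on $V(G)$ as $c^{(t)}$, so the corresponding multisets under $\tilde c^{(t)}$ also coincide, giving $\tilde c^{(t+1)}(v) = \tilde c^{(t+1)}(w)$.

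For $\tilde c^{(t+1)} \sqsubseteq c^{(t+1)}$: suppose $\tilde c^{(t+1)}(v) = \tilde c^{(t+1)}(w)$. Here we invoke Lemma 1 in its contrapositive form, which yields $\tilde c^{(t)}(v) = \tilde c^{(t)}(w)$ (otherwise $v$ and $w$ could never merge at step $t+1$). By the inductive hypothesis, this gives $c^{(t)}(v) = c^{(t)}(w)$. Furthermore, the equality $\tilde c^{(t+1)}(v) = \tilde c^{(t+1)}(w)$ forces the neighborhood multisets under $\tilde c^{(t)}$ to agree, and translating through the partition equivalence gives agreement of the neighborhood multisets under $c^{(t)}$ as well. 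Both inputs to the hash defining $c^{(t+1)}$ then match, so $c^{(t+1)}(v) = c^{(t+1)}(w)$.

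The main subtlety is the reverse direction, which would fail for a general coloring scheme because $\tilde\Gamma_G$ appears to discard the node's own previous color. Lemma 1 is precisely the statement that makes this discard harmless on uncolored graphs: once two nodes separate, they never re-merge, so the neighborhood multiset at step $t$ already encodes enough information to recover the previous color's equivalence class. No further calculation is needed beyond chasing the induction through both directions.
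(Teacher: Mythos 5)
Your proof is correct and follows exactly the route the paper intends: the corollary is stated as an immediate consequence of Lemma~\ref{lem:1}, and your induction makes that implicit argument explicit, with Lemma~\ref{lem:1} (in contrapositive form) supplying precisely the missing piece for the direction $\tilde c^{(t+1)} \sqsubseteq c^{(t+1)}$, where dropping the node's own color from the hash could otherwise cause separated nodes to re-merge. No gaps; the multiset-translation step via the shared partition is handled correctly in both directions.
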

Thus we can use the update rule $\tilde\Gamma$ for the proof on unlabeled graphs. For the proof, it will be convenient to 
assume that $V(G) = [1\!\!:\!\!n]$ (although we still work with the notation $V(G)$). It follows that $n = |V|$.  A node coloring $f^{(t)}$ defines a matrix $F^{(t)}\in \RR^{n\times d}$ where the $i^\text{th}$ row of $F^{(t)}$ is defined by $f^{(t)}(i)\in \RR^{1 \times d}$.
Here we interpret $i$ as a node from $V(G)$. As colorings and matrices can be interpreted as one another, given a matrix $F\in \RR^{V(G)\times d}$ we write $\Gamma_G(F)$ (or $\tilde\Gamma_G(F)$) for a Weisfeiler-Leman iteration on the coloring $c_F$ induced by the matrix $F$. For the GNN computation we provide a matrix based notation.
Using the adjacency matrix $A\in \RR^{n \times n}$ of $G$ and a coloring $F\in \RR^{n \times d}$, we can write the update rule of the GNN layer as
\begin{equation*}
	F^{t+1} = \Lambda_{A,W,b}(F^{(t)}) = \sigma( AF^{(t)}W^{(t)} + bJ),
\end{equation*}
where $\Lambda_{A,W,b}$ is the refinement operator of GNNs corresponding to a single iteration of the $1$-WL. For simplicity of the proof, we choose
\begin{equation*}
	\sigma=\sgn = 
	\begin{cases}
		1  & \text{if }x>0,    \\
		-1 & \text{otherwise}, 
	\end{cases}
\end{equation*}
and the bias as 
\begin{equation*}
	b =-1\,. 
\end{equation*}
Note that we later provide a way to simulate the sign-function using ReLu operations to indicate that choosing the sign function is not really a hard restriction.
\begin{lemma}
	\label{lem:dist2lu}
	Let $B\in\ZZ^{s\times t}$ be a matrix such that $0\le B_{ij}\le n-1$
	for all $i,j$ and the rows of $B$ are pairwise distinct. 
	Then there is a matrix $X\in\RR^{t\times s}$ such that the matrix
	$\sgn\big(BX-J)\in\{-1,1\}^{s\times s}$ is non-singular.
\end{lemma}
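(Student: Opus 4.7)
The plan is to construct $X$ by encoding the rows of $B$ as integers in base $n$, and then choosing the columns of $X$ to implement $s$ successive threshold queries on this encoding; this yields a triangular sign pattern for $\sgn(BX-J)$ which is easily shown to be non-singular.

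Concretely, let $V = (1, n, n^2, \ldots, n^{t-1})^\top \in \RR^{t}$ and set $\phi_i := b_i V = \sum_{j=1}^{t} B_{ij}\, n^{j-1}$. Since each entry of $B$ lies in $\{0,\ldots,n-1\}$ and the rows of $B$ are pairwise distinct, the $\phi_i$ are pairwise distinct nonnegative integers, being precisely the base-$n$ values of the rows. Relabeling rows of $B$ only permutes the rows of the eventual matrix $M := \sgn(BX-J)$ and does not affect its non-singularity, so we may assume $0 \le \phi_1 < \phi_2 < \cdots < \phi_s$. Taking each column of $X$ to be a scalar multiple $x_k = c_k V$ of the same vector $V$, one gets $(BX)_{ik} = c_k \phi_i$, so $M_{ik} = \sgn(c_k \phi_i - 1)$.

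For $k \ge 2$, set $c_k = 1/\gamma_k$ with $\gamma_k := \phi_{k-1} + \tfrac{1}{2}$. Because $\phi_k \ge \phi_{k-1} + 1$ we have $\gamma_k \in (\phi_{k-1}, \phi_k)$, and because $\gamma_k$ is not an integer while every $\phi_i$ is, $c_k \phi_i \ne 1$. It follows that $M_{ik} = +1$ iff $\phi_i \ge \phi_k$, that is, iff $i \ge k$. The first column is split into two subcases: if $\phi_1 > 0$, pick $c_1 = 2$ (so the threshold $\gamma_1 = 1/2$ lies strictly below $\phi_1 \ge 1$), making column $1$ the all-$(+1)$ vector; if $\phi_1 = 0$, set $c_1 = 0$, making column $1$ the all-$(-1)$ vector.

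Finally, to verify non-singularity, observe that for each $k \ge 2$ column $k$ of $M$ has $-1$'s in positions $1,\ldots,k-1$ and $+1$'s in positions $k,\ldots,s$, while column $1$ is a constant $\pm\mathbf{1}$ vector. The consecutive difference $M_{\cdot,k+1} - M_{\cdot,k} = -2\, e_k$ for $k \ge 2$, and one further elementary operation involving column $1$ recovers $e_1$ and $e_s$, so the columns of $M$ span $\RR^{s}$. The only genuine subtlety is the boundary case where one row of $B$ is zero (forcing $\phi_1 = 0$), which must be handled separately for column $1$ because a scalar multiple of $V$ cannot threshold the zero row; once that case is dispatched, the rest of the argument is a direct calculation and no deeper obstacle arises.
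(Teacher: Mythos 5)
Your proposal is correct and follows essentially the same route as the paper's proof: both encode the rows of $B$ as distinct integers via the base-$n$ vector $(1,n,\dots,n^{t-1})^{\top}$, take $X$ of rank one so that $BX$ becomes an outer product of these integers with a vector of thresholds, and pick the thresholds to interleave the sorted values so that $\sgn(BX-J)$ has a triangular $\pm 1$ pattern whose non-singularity follows from consecutive column (or row) differences. The only differences are cosmetic---you sort increasingly and give explicit thresholds $1/(\phi_{k-1}+\tfrac12)$, while the paper sorts decreasingly and picks $x_j$ in an open interval, which also lets it absorb the zero-row case without the separate treatment you give to column $1$.
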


\begin{proof}
	Let $z=(1,n,n^2,\ldots,n^{t-1})^T\in \RR^t$ where $n$ is the upper bound on the matrix entries of $B$  and $b=B z\in\RR^s$. 
	Then the entries of $b$ are nonnegative and pairwise distinct.
	Without loss of generality, we assume that $\vec b=(b_1,\ldots,b_s)^T$ such that $b_1>b_2>\cdots >b_s\ge 0$. Now we choose numbers $x_{1},\ldots,x_s\in\RR$ such that
	\begin{equation}
		\label{eq:25}
		\begin{cases}
			b_{i}\cdot x_j<1 & \text{if }i\ge j, \\
			b_i\cdot x_j>1   & \text{if }i<j     
		\end{cases}
	\end{equation}
	for all $i,j\in[s]$ as the $b_i$ are ordered. 
	Let $\vec x=(x_1,\ldots,x_s)\in\RR^{1\times s}$ and $C=  b\cdot x\in\RR^{s\times s}$ and $\hat C =\sgn(C-J)$. 
	Then $C$ has entries $C_{ij}=b_i\cdot x_j$, and thus by \eqref{eq:25}, 
	\begin{equation}
		\label{eq:29}
		\hat C=
		\begin{pmatrix}
			-1     & 1  & 1 & 1      & \cdots & 1  & 1      \\
			-1     & -1 & 1 &        & \cdots &    & 1      \\
			\\
			\vdots &    &   & \ddots & \ddots &    & \vdots \\
			\\
			-1     &    &   & \cdots &        & -1 & 1      \\
			-1     &    &   & \cdots &        & -1 & -1     
		\end{pmatrix}
	\end{equation}
	Thus $\hat C$ is non-singular. Now we simply let $X= z\cdot x$. Then $BX=C$.
\end{proof}
Let us call a matrix \emph{row-independent modulo equality} if the set
of all rows appearing in the matrix is linearly independent.

\begin{example}
	The matrix
	\[
		\begin{pmatrix}
			1 & 1 & 0 \\
			1 & 1 & 0 \\
			0 & 1 & 0 \\
			0 & 1 & 0 \\
			0 & 1 & 0 
		\end{pmatrix}
	\]
	is row-independent modulo equality.
\end{example}
Note that the all-$1$ matrix $J$ is row-independent modulo equality in all dimensions.
\begin{lemma}\label{lem:equivalencePerStep}
	Let $d\in\NN$, and let  $F\in\RR^{n\times d}$ be row independent modulo equality. 
	Then there is a $W\in\RR^{d\times n}$ such that the matrix $\Lambda_{A,W,-1}(F)$ is row independent modulo equality and
	\begin{equation*}
		\Lambda_{A,W,-1}(F)\equiv \tilde\Gamma_G(F)\,.
	\end{equation*}
\end{lemma}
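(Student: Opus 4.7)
The plan is to construct $W$ so that one GNN layer both reproduces one step of $\tilde\Gamma_G$ and maintains the row-independence-modulo-equality invariant. The whole argument hinges on a single observation: because the distinct rows of $F$ are linearly independent, multiplication by the adjacency matrix already realises the multiset aggregation performed by $\tilde\Gamma_G$. Concretely I would set $M := AF \in \RR^{n\times d}$; the $v$-th row of $M$ equals $\sum_{u\in N(v)} f(u)$, and linear independence of the distinct rows of $F$ forces this sum to both encode and be determined by the multiset $\oms f(u)\mid u\in N(v)\cms$. Hence two rows of $M$ coincide if and only if $\tilde\Gamma_G(F)$ assigns the two corresponding nodes the same colour. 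Let $B\in\RR^{s\times d}$ collect the $s\le n$ distinct rows of $M$.

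The remaining task is to turn $M$ into a $\{-1,1\}$-valued matrix with the same row-equivalence pattern and with linearly independent distinct rows. For this I would invoke Lemma \ref{lem:dist2lu} on $B$ to obtain $X\in\RR^{d\times s}$ with $\hat C := \sgn(BX - J)\in\{-1,1\}^{s\times s}$ non-singular, and then set $W := [\,X\ \vert\ 0\,] \in \RR^{d\times n}$ by padding with $n-s$ zero columns. Writing the computation block-wise gives
$$\Lambda_{A,W,-1}(F) \;=\; \sgn(MW - J) \;=\; \bigl[\,\sgn(MX-J)\ \big\vert\ -J\,\bigr],$$
so equal rows of $M$ automatically yield equal rows of the output, proving $\Lambda_{A,W,-1}(F) \equiv \tilde\Gamma_G(F)$. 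The (at most) $s$ distinct rows of the output are precisely the rows of $\hat C$ extended by a common suffix of $-1$s; since appending the same tail to every vector of a linearly independent set preserves linear independence, the distinct rows of $\Lambda_{A,W,-1}(F)$ remain linearly independent, which is exactly the row-independence-modulo-equality condition.

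The main obstacle is the hypothesis of Lemma \ref{lem:dist2lu}, which assumes $B$ is integer-valued with entries in $\{0,\ldots,n-1\}$, whereas $M=AF$ may have arbitrary real entries. I would reconcile this within the ambient induction that will prove Theorem \ref{equal:restated}: starting from $F^{(0)}=J$ (or any small integer encoding of the trivial colouring) and invoking the present lemma inductively maintains the invariant $F^{(t)}\in\{-1,1\}^{n\times n'}$ for $t\ge 1$, whence $M=AF^{(t)}$ is integer with entries bounded in absolute value by the maximum degree; a trivial affine shift puts these entries in $\{0,\ldots,2n-2\}$, and the power-sum construction underlying Lemma \ref{lem:dist2lu} applies verbatim after replacing the base $n$ by a slightly larger integer. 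The creative content of the proof is thus the reduction of one WL refinement step to a linear aggregation via $A$ followed by the discretisation guaranteed by Lemma \ref{lem:dist2lu}; everything else is bookkeeping about block structure and preservation of linear independence under uniform extensions.
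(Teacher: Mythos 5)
Your high-level strategy is the paper's: aggregate neighborhoods by multiplying with $A$, observe that row-independence modulo equality makes the resulting rows a faithful encoding of the neighborhood multisets, and then discretise with \cref{lem:dist2lu}. But there is a genuine gap in how you reach the hypotheses of \cref{lem:dist2lu}. You apply it to the distinct rows of $M=AF$, which for an arbitrary real matrix $F$ (the actual hypothesis of the lemma) are neither integral nor bounded in $[0,n-1]$. Your proposed repair --- restricting to $\{-1,1\}$-valued $F$ via an invariant maintained in the ambient induction, then affinely shifting $AF$ into $\{0,\dots,2n-2\}$ --- proves a weaker statement than the lemma as written, and moreover the shift does not obviously combine with the architecture: the network computes $\sgn(AFW-J)$ with the bias fixed at $-J$, whereas applying \cref{lem:dist2lu} to the shifted matrix $B'=B+(n-1)J$ certifies non-singularity of $\sgn(B'X-J)=\sgn\bigl(BX+(n-1)JX-J\bigr)$, i.e.\ a per-column bias $(n-1)\mathbf{1}^{T}X-\mathbf{1}$ rather than the uniform $-\mathbf{1}$. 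One also has to recheck the ordering construction in \cref{lem:dist2lu} once entries can be zero or negative (the case $b_i=0$ with $i<j$ makes $b_ix_j>1$ impossible), so this is not mere bookkeeping.

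The missing idea is a projection step that the paper inserts before multiplying by $A$. Writing $\tilde F\in\RR^{r\times d}$ for the distinct rows of $F$, linear independence gives a matrix $M\in\RR^{d\times r}$ with $\tilde FM=I_r$, so that $FM$ is the $0/1$ indicator matrix of the colour classes and $AFM=D$ with $D_{vj}=|N_G(v)\cap Q_j|$. This $D$ is automatically a non-negative integer matrix with entries at most $n-1$ and with row-equality pattern exactly $\tilde\Gamma_G(F)$, so \cref{lem:dist2lu} applies to its distinct rows with no side conditions on $F$; taking $W=MX$ (padded with zero columns) then yields $\Lambda_{A,W,-1}(F)=\sgn(DX-J)$, which is row independent modulo equality and equivalent to $\tilde\Gamma_G(F)$. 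This single extra factor $M$ is what makes the lemma hold for arbitrary real-valued $F$ and removes the need for your external invariant and affine shift.
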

\begin{proof}
	Let $Q_1,\ldots,Q_r$ be the color
	classes of $F$ (that is, for all $v,v'\in V(G)$ it holds that
	$F_{v}=F_{v'}\iff\exists j\in[r]:\; v,v'\in Q_j$). Let
	$\tilde F\in\RR^{r\times d}$ be the matrix with rows $\tilde
	F_{j}=F_{v}$ for all $j\in[r], v\in Q_j$. Then the rows of $\tilde
	F$ are linearly independent, and thus there is a matrix 
	$M\in\RR^{d\times r}$ such that $\tilde FM$ is the $(r\times r)$ identity
	matrix. 
	It follows that $FM\in\RR^{n\times r}$ is the matrix with entries
		
	\begin{equation}
		\label{eq:12}
		(FM)_{vj}=
		\begin{cases}
			1 & \text{if }v\in Q_j, \\
			0 & \text{otherwise}.   
		\end{cases}
	\end{equation}
	Let $D\in\ZZ^{n\times r}$ be
	the matrix with entries $D_{vj}:=|N_G(v)\cap Q_j|$. Note that
	\begin{equation}
		\label{eq:13}
		AFM=D,
	\end{equation}
	because for all $v\in V$ and $j\in[t]$ we have
	\[
		(AFM)_{vj}=\sum_{v'\in V(G)}A_{vv'}(FM)_{v'j}
		= \sum_{v'\in Q_j}A_{vv'}=D_{vj},
	\]
	where the second equality follows from Equation \eqref{eq:12}. By the definition of $\Gamma_G$ as the $1$-WL operator on uncolored graphs, we have
	\begin{equation}
		\label{eq:10}
		\Gamma_G(F)\equiv D
	\end{equation}
	if we view $D$ as a coloring of $V$.
				
	Let $P_1,\ldots,P_s$ be the color classes of $D$, and let $\tilde D\in\ZZ^{s\times r}$ be the matrix
	with rows $\tilde D_{i}=D_{v}$ for all $i\in[s]$ and $v\in
	P_i$. Then $0\le\tilde D_{ij}\le n-1$ for all $i,j$, and the rows of
	$\tilde D$ are pairwise distinct. By Lemma~\ref{lem:dist2lu}, there is a
	matrix $X\in\RR^{r\times s}$ such that the matrix
	$\sgn(\tilde DX-J)\in\RR^{s\times s}$ is non singular. This implies
	that the matrix $\sgn(AFMX-J)=\sgn(DX-J)$ is row-independent modulo
	equality. Moreover, $\sgn(AFMX-J)\equiv D\equiv \Gamma_G(F)$ by \eqref{eq:10}. We let $W\in\RR^{p\times n}$ be the matrix of obtained from
	$MX\in\RR^{p\times s}$ by adding $n-s$ all-0 columns. Then 
	\[
		\Lambda_{A,W,-1}(F)=\sgn(AFW-J)
	\]
	is row-independent modulo equality and
	$\Lambda_{A,W,-1}(F)\equiv\sgn(AFMX-J)\equiv \tilde\Gamma_G(F)$. 
\end{proof}
\begin{corollary}
	There is a sequence $\vec W=(W^{(t)})_{t\in\NN}$ with
	$W^{(t)}\in \RR^{n\times n}$ such that for all $r\in\NN$,
	\[
		\tilde c^{(t)}\equiv \Lambda_{A,\vec W,-1}^{(t)}\,.
	\]
	where $\tilde c^{(t)}$ is given by the $t$-fold application of $\tilde\Gamma_G$ on the initial uniform coloring $J$.
\end{corollary}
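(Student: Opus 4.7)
The plan is to prove this corollary by induction on $t\in\NN$, using \cref{lem:equivalencePerStep} at every step to greedily build the weight sequence. The inductive invariant I will maintain is twofold: first, the coloring matrix $F^{(t)}$ produced by the $t$-fold application of the GNN operator with weights $W^{(1)},\ldots,W^{(t)}$ is row-independent modulo equality; and second, $F^{(t)}\equiv \tilde c^{(t)}$ as colorings of $V(G)$. Observe that these are exactly the hypothesis and the conclusion of \cref{lem:equivalencePerStep}, so the induction will amount to a direct plug-in of that lemma.

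For the base case $t=0$, take $F^{(0)}:=J\in\RR^{n\times n}$, the all-ones matrix. This is trivially row-independent modulo equality, since the set of distinct rows consists of the single nonzero vector $(1,\ldots,1)$, and it induces the uniform coloring, which by definition equals $\tilde c^{(0)}$. For the inductive step, suppose the invariant holds at step $t$. I apply \cref{lem:equivalencePerStep} to $F^{(t)}$ to obtain a weight matrix $W^{(t+1)}\in\RR^{n\times n}$ (padded with zero columns on the right, exactly as in that lemma's proof) such that $F^{(t+1)}:=\Lambda_{A,W^{(t+1)},-1}(F^{(t)})$ is again row-independent modulo equality and satisfies $F^{(t+1)}\equiv \tilde\Gamma_G(F^{(t)})$.

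To close the induction I need one elementary fact: the WL operator $\tilde\Gamma_G$ respects equivalence of colorings, meaning that $F\equiv F'$ implies $\tilde\Gamma_G(F)\equiv \tilde\Gamma_G(F')$. This is immediate from the definition of $\tilde\Gamma_G$, since the update rule reads off only the multiset of colors appearing in each neighborhood, and that multiset depends on the coloring solely through the partition it induces on $V(G)$. Combining this with the inductive hypothesis $F^{(t)}\equiv \tilde c^{(t)}$ gives $\tilde\Gamma_G(F^{(t)})\equiv \tilde\Gamma_G(\tilde c^{(t)})=\tilde c^{(t+1)}$, hence $F^{(t+1)}\equiv \tilde c^{(t+1)}$, as required.

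There is no serious conceptual obstacle in this argument---the analytic work has already been done in \cref{lem:equivalencePerStep}, which explicitly constructed the weight matrix at each step and, crucially, was phrased so as to preserve the row-independent-modulo-equality property of its input. The only thing to watch is bookkeeping: picking an initialization with $n$ columns so that all weight matrices genuinely live in $\RR^{n\times n}$, and observing that the zero-padding already baked into the proof of \cref{lem:equivalencePerStep} does not affect the induced coloring, so the equivalence statement is unaffected.
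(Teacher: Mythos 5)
Your proof is correct and matches the paper's intent exactly: the paper states this corollary as an immediate consequence of Lemma~\ref{lem:equivalencePerStep}, and your induction (starting from $F^{(0)}=J$, which the paper notes is row-independent modulo equality, and applying the lemma at each step) is precisely the argument being left implicit. The one step you make explicit---that $\tilde\Gamma_G$ respects equivalence of colorings---is indeed needed to chain the equivalences and is correctly justified.
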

\begin{remark}\label{rem:fixedSizeOutput}
	The construction in \cref{lem:equivalencePerStep} always outputs a matrix with as many columns as there are color classes in the resulting coloring.
	Thus we can choose $d$ to be $n$ and pad the matrix using additional $0$-columns.
\end{remark}

\subsection{Colored Graphs}
We now extend the computation to colored graphs.
In order to do that, we again use an equivalent but slightly different variant of the Weisfeiler-Leman update rule leading to colorings $c_{l,0}^{(t)}$ instead of the usual $c_{l}^{(t)}$. 
We then start by showing that both update rules are equivalent.

We define $\Gamma_{G,l}$ to be the refinement operator for the $1$-WL, mapping a coloring $c_{l,0}^{(t-1)}$ to the updated one $c_{l,0}^{(t)}$ as follows:
\begin{equation*}\label{eq:4}
	c_{l,0}^{(t)}(v) = \left(\Gamma_{G,l}\big(c_{l,0}^{(t-1)}\big)\right)(v)= \textsc{hash}\Big(\! \big(c_{l,0}^{(0)}(v),\oms c_{l,0}^{(t-1)}(u)\!\mid\!u \in N(v) \cms \big)\! \Big)\,.
\end{equation*}
Note that for $\Gamma_{G,l}$ we use the initial color $c_{l,0}^{(0)}(u)$ of a node $u$ whereas $\Gamma_G$ used the color $c_l^{(t-1)}(u)$ from the previous round.
The idea of using those old colors is to make sure that any two nodes which got a different color in iteration $t$, get different colors in iteration $t'>t$.
This is formalized by the following lemma.

\begin{lemma}\label{lem:2}
	Let $(G,l)$ be a colored graph, $v,w\in V(G)$, and $q\in\NN$ such that
	$c_{l,0}^{(t)}(v)\neq c^{(t)}_{l,0}(w)$. 
	Then $c^{(t')}_{l,0}(v)\neq c^{(t')}_{l,0}(w)$ for all $t'\geq t$.
\end{lemma}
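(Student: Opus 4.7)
My plan is to reduce the lemma to the refinement statement $c_{l,0}^{(t+1)} \sqsubseteq c_{l,0}^{(t)}$, valid for every $t \geq 0$. Granted this, the lemma follows immediately: iterating refinement yields $c_{l,0}^{(t')} \sqsubseteq c_{l,0}^{(t)}$ for all $t' \geq t$, so the hypothesis $c_{l,0}^{(t)}(v) \neq c_{l,0}^{(t)}(w)$ propagates to $c_{l,0}^{(t')}(v) \neq c_{l,0}^{(t')}(w)$ by the definition of $\sqsubseteq$.

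I would prove the refinement by induction on $t$. The base case $t=0$ is immediate from the injectivity of $\textsc{hash}$ applied to the definition of $c_{l,0}^{(1)}$, since equality of $c_{l,0}^{(1)}(v)$ and $c_{l,0}^{(1)}(w)$ forces the pairs fed into $\textsc{hash}$ to agree, and in particular $c_{l,0}^{(0)}(v)=c_{l,0}^{(0)}(w)$. For the inductive step, assume $c_{l,0}^{(t)} \sqsubseteq c_{l,0}^{(t-1)}$, so there is a well-defined map $\phi$ that sends each $t$-color to the unique $(t-1)$-color shared by all nodes of that $t$-color. Suppose $c_{l,0}^{(t+1)}(v) = c_{l,0}^{(t+1)}(w)$. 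Injectivity of $\textsc{hash}$ gives (i) $c_{l,0}^{(0)}(v) = c_{l,0}^{(0)}(w)$ and (ii) $\oms c_{l,0}^{(t)}(u) \mid u \in N(v) \cms = \oms c_{l,0}^{(t)}(u) \mid u \in N(w) \cms$. Applying $\phi$ componentwise to (ii) yields $\oms c_{l,0}^{(t-1)}(u) \mid u \in N(v) \cms = \oms c_{l,0}^{(t-1)}(u) \mid u \in N(w) \cms$, and combined with (i) this is precisely the input that $\textsc{hash}$ consumes in the definition of $c_{l,0}^{(t)}(\cdot)$, so $c_{l,0}^{(t)}(v) = c_{l,0}^{(t)}(w)$, as required for refinement.

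The only subtle point, though not a deep obstacle, is the componentwise application of $\phi$: matching multisets of $t$-colors automatically produce matching multisets of $(t-1)$-colors because $\phi$ is a function on the set of colors. This is cleaner than the analogous argument for $\Gamma_G$ in \cref{lem:1}, where the node's own color also changes between rounds and a counting/color-class argument is needed; in the $\Gamma_{G,l}$ variant, the fact that the static initial color $c_{l,0}^{(0)}(v)$ is carried through every hash invocation is exactly what makes refinement—and therefore monotone color separation—hold by a direct induction.
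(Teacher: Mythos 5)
Your proof is correct, but it takes a genuinely different route from the paper's. The paper argues by contradiction from a minimal $t$ at which two nodes are separated at step $t$ but merged at step $t+1$: it notes that the color classes $Q_1,\dots,Q_q$ of $c_{l,0}^{(t)}$ refine the classes $P_1,\dots,P_p$ of $c_{l,0}^{(t-1)}$, observes that separation at step $t$ must come either from distinct initial colors (ruled out because $c_{l,0}^{(0)}$ is an argument of the hash at step $t+1$) or from $|N_G(v)\cap P_i|\neq|N_G(w)\cap P_i|$ for some $i$, and then derives a contradiction by summing the neighbor counts over the $Q_{j_k}$ partitioning $P_i$. You instead prove the stronger and cleaner statement $c_{l,0}^{(t+1)}\sqsubseteq c_{l,0}^{(t)}$ directly by induction, using the well-defined map $\phi$ from step-$t$ colors to step-$(t-1)$ colors (which exists by the inductive hypothesis) applied componentwise to the neighbor multisets; the lemma then follows by transitivity of $\sqsubseteq$. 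Your approach avoids the counting argument entirely and makes transparent exactly where the static initial color $c_{l,0}^{(0)}(v)$ in the $\Gamma_{G,l}$ update is used, namely to recover agreement of the first hash component at step $t$. The paper's counting argument buys nothing extra here --- it is essentially imported verbatim from \cref{lem:1}, where the authors chose it for the colorless update rule $\tilde\Gamma_G$ --- so your proof is, if anything, the more economical one for this lemma. The only point to make fully explicit is that $\phi$ is well defined on the set of \emph{realized} colors precisely because the inductive hypothesis says $c_{l,0}^{(t)}(x)=c_{l,0}^{(t)}(y)$ implies $c_{l,0}^{(t-1)}(x)=c_{l,0}^{(t-1)}(y)$, and that applying a function to a multiset preserves multiset equality; you state both, so the argument is complete.
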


\begin{proof}
	Let $t\in\NN$ be minimal such that there are $v,w$ with
	\begin{align}
		\label{eq:5}
		c^{(t)}_{l,0}(v)   & \neq c^{(t)}_{l,0}(w) 
		\intertext{and}
		\label{eq:6}
		c^{(t+1)}_{l,0}(v) & =c^{(t+1)}_{l,0}(w).  
	\end{align}
	Then $t\ge 1$, because by \eqref{eq:4},
	$c^{(1)}_{l,0}(v)=c^{(1)}_{l,0}(w)$ implies $c^{(0)}_{l,0}(v)=c^{(0)}_{l,0}(w)$. Let $P_1,\ldots,P_p$ be the color classes of $c_{l,0}^{(t-1)}$, and let $Q_1,\ldots,Q_q$ be the color classes of $c_{l,0}^{(t)}$. 
	Observe that the partition $\{Q_1,\ldots,Q_q\}$ of $V(G)$ refines the partition
	$\{P_1,\ldots,P_p\}$. The argument is the same as in the proof of
	Lemma~\ref{lem:1}.
		
	Choose $v,w\in V(G)$ satisfying \eqref{eq:5} and \eqref{eq:6}. 
	By \eqref{eq:5}, either $c_{l,0}(v)\neq c_{l,0}(w)$ or there is an $i\in[1:p]$ such that $|N_G(v)\cap P_i|\neq |N_G(w)\cap P_i|$. 
	By \eqref{eq:4}, $c_{l,0}(v)\neq c_{l,0}(w)$ contradicts \eqref{eq:6}. 
	Thus $|N_G(v)\cap P_i|\neq |N_G(w)\cap P_i|$ for some $i\in[1\!\!:\!\!p]$. 
	Let $j_1,\ldots,j_\ell\in[1\!\!:\!\!q]$ such that $P_i=Q_{j_1}\cup\ldots\cup Q_{j_\ell}$. By \eqref{eq:5}, for all $k\in[1\!\!:\!\!\ell]$ we have $|N_G(v)\cap Q_{j_k}|= |N_G(w)\cap Q_{j_k}|$. 
	As the $Q_j$ are disjoint, this implies $|N_G(v)\cap P_i|=|N_G(w)\cap P_i|$, which is a contradiction.
\end{proof}
\begin{corollary}\label{cor:c-l-zero}
	For all graphs $G$ and initial vertex colorings $l$ of $G$ we have
	$c_{l,0}^{(t)}\equiv c_{l}^{(t)}$ for all $t\in\NN$.
\end{corollary}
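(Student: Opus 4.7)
My plan is to prove $c_{l,0}^{(t)}\equiv c_l^{(t)}$ by induction on $t$, establishing both refinement directions $c_l^{(t)}\sqsubseteq c_{l,0}^{(t)}$ and $c_{l,0}^{(t)}\sqsubseteq c_l^{(t)}$ simultaneously. The base case $t=0$ is immediate since both colorings equal $l$ by definition. Before the inductive step I would record a trivial auxiliary fact: $c_l^{(t)}\sqsubseteq c_l^{(0)}=l$ for every $t$, which follows from a one-line induction on the update rule for $c_l$ (the color $c_l^{(t-1)}(v)$ appears as the first component of the hashed pair at step $t$, so equality at step $t$ forces equality at step $t-1$, and hence at step $0$).

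For the direction $c_l^{(t)}\sqsubseteq c_{l,0}^{(t)}$, I would argue as follows. Suppose $c_l^{(t)}(v)=c_l^{(t)}(w)$. Unpacking the hash, I get $c_l^{(t-1)}(v)=c_l^{(t-1)}(w)$ and $\{\!\!\{c_l^{(t-1)}(u)\mid u\in N(v)\}\!\!\} = \{\!\!\{c_l^{(t-1)}(u)\mid u\in N(w)\}\!\!\}$. The auxiliary fact gives $l(v)=l(w)$, so the first components $c_{l,0}^{(0)}(v)$ and $c_{l,0}^{(0)}(w)$ feeding into the $c_{l,0}$-hash agree. By the inductive hypothesis $c_l^{(t-1)}\sqsubseteq c_{l,0}^{(t-1)}$, the multisets of neighbor $c_{l,0}^{(t-1)}$-colors around $v$ and $w$ also coincide. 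Hence the two hashed arguments are identical, giving $c_{l,0}^{(t)}(v)=c_{l,0}^{(t)}(w)$.

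For the reverse direction $c_{l,0}^{(t)}\sqsubseteq c_l^{(t)}$, the subtlety is that $c_{l,0}^{(t)}$ records only the initial color, not the previous round's color, so an assumption $c_{l,0}^{(t)}(v)=c_{l,0}^{(t)}(w)$ does not immediately yield $c_{l,0}^{(t-1)}(v)=c_{l,0}^{(t-1)}(w)$. This is exactly where I invoke Lemma \ref{lem:2}: contrapositively, equality at step $t$ implies equality at step $t-1$ for $c_{l,0}$. Armed with that, I proceed as before—the multiset of neighbor $c_{l,0}^{(t-1)}$-colors agrees, the inductive hypothesis transfers this agreement to $c_l^{(t-1)}$, and the step-$(t-1)$ equality plus matching multisets reproduce the $c_l^{(t)}$-hash.

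The main obstacle is precisely the asymmetry I just described: the two update rules feed back different information from previous iterations, so the $c_{l,0}\to c_l$ direction cannot be shown by blind unpacking of the hash, and it really requires the monotonicity provided by Lemma \ref{lem:2}. Once that lemma is available, both directions collapse to routine checks that the hashed tuples agree, and the inductive step goes through.
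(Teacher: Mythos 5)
Your proof is correct and matches the paper's intended argument: the paper states this corollary immediately after Lemma~\ref{lem:2} without writing out the induction, and your two-directional induction---with the trivial unpacking for $c_l^{(t)}\sqsubseteq c_{l,0}^{(t)}$ and the invocation of Lemma~\ref{lem:2} exactly where the modified rule fails to feed back the previous round's color---is precisely the argument the lemma is designed to support.
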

We now consider the slightly modified update rule for $1$-GNNs which takes the initial colors $l$ into account.
Let the matrix $F^{(0)}_{l,0}\in \RR^{n\times d}$ be an encoding (such as the one-hot encoding) of $l$ such that $F^{(0)}_{l,0}$ is linearly independent modulo equality. Then
\begin{equation*}
	F^{(t+1)}_{l,0} = \Lambda_{A,0,\vec W,\vec b}(F^{(t)}_{l,0})=\Big(F^{(0)}_{l,0}, \;\;\Lambda_{A,W,b}\big(\Lambda_{A,0,\vec W,b} (F^{(t)}_{l,0})\big)\Big)\,.
\end{equation*}

We now show that this version of GNNs, which can be implemented by the simple variant (1) of GNNs provided in the main paper, is equivalent to $1$-WL on colored graphs, proving the main theorem.

\begin{proof}[Proof of Theorem 2 (sketch)]
	We prove the theorem by induction over the iteration $t$.
	The initial colorings are chosen consistent with $l$ providing  $c_{l,0}^{(0)} \equiv F^{(0)}_{l,0}$.
	For the induction step we assume $c_{l,0}^{(t)} \equiv F^{(t)}_{l,0}$ for iteration $t$.
	We know by \cref{lem:equivalencePerStep} that the inner part $\Lambda_{A,W,b}\big(\Lambda_{A,0,\vec W,b} (F^{(t)}_{l,0})\big)$ of the update rule results in color classes which are identical to the ones that $\tilde\Gamma_G$ would produce.
	This implies that restricted to each color class $Q$ from $Q_1^{(t)},\dots,Q_q^{(t)}$ of iteration $t$, the new color classes $Q_1^{(t+1)}\cap Q,\dots,Q_{q'}^{(t+1)}\cap Q$ restricted to $Q$ match the coloring $c_{l,0}^{(t+1)}|_Q$ that is $c_{l,0}^{(t+1)}$ restricted to $Q$.
	This holds as within one color class, the common color of the nodes contains no further information as shown in \cref{lem:1}.
	Observe that colors are represented by linearly independent row vectors.
	This especially holds for $F^{(t+1)} = \Lambda_{A,W,b}\big(\Lambda_{A,0,\vec W,b} (F^{(t)}_{l,0})\big)$.
	In order to show that $F^{(t+1)}_{l,0}$ represents the coloring $c^{(t+1)}_{l,0}$ we have to prove two properties.
	\begin{enumerate}
		\item Given $Q_i^{(0)} \neq Q_j^{(0)}$ and $u\in Q_i^{(0)}, v\in Q_j^{(0)}$, we have $F^{(t+1)}_{l,0}(u)\neq F^{(t+1)}_{l,0}(v)$.
		\item $F^{(t+1)}_{l,0}$ is linearly independent modulo equality.
	\end{enumerate}
	In the first item we interpret $F^{(t+1)}_{l,0}$ as a coloring function. Both properties follow directly from the definition of $F^{(t+1)}_{l,0}$ as the concatenation of the matrices $F^{(0)}_{l,0}$ and $F^{(t+1)}$.
	The first property holds as the row vectors of $F^{(0)}_{l,0}$ are clearly different, as otherwise $Q_i=Q_j$.
	The second property follows from the fact that linear independence cannot be lost by extending a matrix.
	Thus within each old color class $Q_i^{(t)}$, all vectors are linearly independent modulo equality as $F^{(t+1)}$ (without subscript) is linearly independent modulo equality.
	This then extends to all combinations of old color classes and colors from $F^{(t+1)}$ as $F^{(0)}_{l,0}$ is also linearly independent modulo equality.
	By induction this chows that $c_{l,0}^{(t)} \equiv F^{(t)}_{l,0}$ for all $t$.
				
	Note that the width of the matrices $F^{(t+1)}_{l,0}$ can be bounded by $2n$.
	The width of $F^{(0)}_{l,0}$ can trivially be bounded by $n$ (in the worst case every node has a different initial color).
	The width of $F^{(t+1)}$ can also be bounded by $n$ using \cref{rem:fixedSizeOutput} as it is the output of a computation using \cref{lem:equivalencePerStep}. Using \cref{cor:c-l-zero} for the step from $c^{(t+1)}_{l,0}$ to $c^{(t+1)}_{l}$ finishes the proof of 2.
\end{proof}

\subsection{Adaptation to the ReLu activation function.}
The above framework can be easily adapted to 
use the ReLu function as the activation function 
of our choice, as follows. Recall that the $\sgn$ activation function 
was applied to the matrix $C-J$,
where the matrix $C$ is defined via Equation \ref{eq:25}.  
We claim that a two-fold application of ReLu, interspersed 
by some elementary matrix operations, also yields the matrix $\sgn(C-J)$.
This ensures that we can use our GNN architecture to work with the ReLu activation function. 

The proof is as follows. 
The first application of ReLu to the matrix $-(C-J)$ yields the matrix $C^{(1)}:= \sigma_\text{ReLu}(J-C)$,
which is a lower triangular matrix satisfying
\begin{equation*}
	\begin{cases}
		C^{(1)}_{ij} = 0 & \text{for } i <j \text{ and} \\
		C^{(1)}_{ij} > 0 & \text{for }i \geq j          
	\end{cases}
\end{equation*}
Let $\delta$ be the smallest positive value occurring in $C^{(1)}$.
Note that $\delta$ is well defined as there are only a finite number of possible binary input matrices (with values $\{-1,1\}$) for which $C^{(1)}$ can deterministically be computed.
The matrix $C^{(2)} := -\frac{2}{\delta}C^{(1)} +2J$ then satisfies 
\begin{equation*}
	\begin{cases}
		C^{(2)}_{ij} = 2 & \text{for } i<j \text{ and} \\
		C^{(2)}_{ij} < 0 & \text{for } i \geq j.       
	\end{cases}
\end{equation*}
Another application of ReLu yields an upper triangular matrix $C^{(3)} := \sigma_\text{ReLu}(C^{(2)})$,
where every non-zero entry is equal to $2$. Subtracting $J$ from $C^{(3)}$ then yields $\sgn{}(C)$. We now show that those operations can be represented by the GNN architecture.
Recall that our GNN architecture was given by stacking a number of iterations of the form, in matrix notation
\begin{equation}\label{eq:matrixGNN}
	F^{(t)} = \sigma \Big( F^{(t-1)}W^{(t)}_1 + AF^{(t-1)}_l W^{(t)}_2 + bW^{(t)}_3 \Big)
\end{equation}
where the $W^{(t-1)}_3$ term is the bias that is added in each iteration and $\sigma$ is a non-linear function such as $\operatorname{tanh}$ or ReLu written as $\sigma_{\operatorname{ReLu}}$.

To compute $\sigma_{\operatorname{ReLu}}(J-C)$ we can choose $W^{(1)}_1$ and $W^{(1)}_2$ appropriate to reach $-C$, $W^{(1)}_3 = J$ as the all-$1$ matrix and $b=1$.
Correspondingly, for $\sigma_{\operatorname{ReLu}}(-\frac{2}{\delta}C^{(1)} +2J)$ we can choose $W^{(2)}_1 = -\frac{2}{\delta}I$, $b=2$, $W^{(2)}_3 = J$ and $W^{(2)}_2 = \vec 0$.
In the next round $W^{(3)}_3$ can be chosen according to the regular choices of $W^{(3)}_1$ and $W^{(3)}_2$ to simulate a computation on $\sgn{}(C)$ instead of $\sgn{}(C+J)$ which works by the linearity of the inner computation. Every layer described by Equation \eqref{eq:matrixGNN} is followed by an adjustment layer plus a few modifications in both the previous and upcoming step.
As a result, we have a constant (two-fold) increase
in the number of layers in our GNN set-up. The following corollary 
summarizes the above discussion.

\begin{corollary}[ReLu activation]
	Let $(G, l)$ be a labeled graph.
	Then there exists a sequence $(\mathbf{W}^t)_{ t > 0}$ and a $1$-GNN architecture based on the simple architecture described in (5) in the main paper using ReLu as activation function, such that $c^t_l=f^{(2t)}$ for all $t\geq 0$.
\end{corollary}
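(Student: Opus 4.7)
The plan is to reduce to \cref{equal:restated} by constructing a two-layer ReLu gadget that computes the same function as the $\sgn(\,\cdot\, - J)$ activation used in its proof. Each original iteration will then be simulated by exactly two ReLu layers, yielding the relation $c_l^{(t)} \equiv f^{(2t)}$ claimed in the corollary.

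Concretely, whenever the proof of \cref{equal:restated} invokes $\sgn(C - J)$ on an integer-valued matrix $C$ obtained linearly from the previous layer's features, I would replace this step by the chain $C^{(1)} = \sigma_{\mathrm{ReLu}}(J-C)$, then $C^{(2)} = -\tfrac{2}{\delta}C^{(1)} + 2J$, then $C^{(3)} = \sigma_{\mathrm{ReLu}}(C^{(2)})$, and finally $C^{(3)} - J$, where $\delta > 0$ is the smallest positive value that can occur in $C^{(1)}$. A direct check shows that $C^{(1)}$ vanishes exactly where $C \geq J$ and is at least $\delta$ otherwise, so $C^{(2)}$ equals $2$ on those same entries and is strictly negative elsewhere; the second ReLu therefore produces a $\{0,2\}$-valued matrix, and subtracting $J$ recovers $\sgn(C-J)$ with the convention $\sgn(0) = -1$ used throughout \cref{equal:restated}.

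To fit this gadget into the $1$-GNN template of Equation~(5), I would realise each of the two ReLu applications as a separate GNN layer: the first uses the negated linear combination that would have produced $C$, with bias $J$; the second uses $W_1^{(t)} = -\tfrac{2}{\delta}I$, $W_2^{(t)} = 0$, and bias $2J$. The trailing subtraction of $J$ and any subsequent multiplication by $A$ required for the next outer iteration are linear, so they can be absorbed into the weight matrices and bias of the first layer of the following round. Thus each $\sgn$-iteration of \cref{equal:restated} is faithfully reproduced by two ReLu-GNN layers, and the equivalence $c_l^{(t)} \equiv f^{(2t)}$ follows by induction on $t$, with the base case handled by the same label-consistent encoding as before.

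The main obstacle is justifying that a single $\delta > 0$ can be chosen uniformly rather than depending on the particular graph. This rests on the observation that $C$ always takes the form $DX$ with $D$ a matrix of neighborhood counts bounded by $n-1$ (as in the proof of \cref{lem:dist2lu}) and $X$ a fixed matrix produced by that lemma, so the set of possible entries of $C^{(1)}$ lies in a finite, purely combinatorial set and admits a strictly positive minimum. Once $\delta$ is pinned down, the remainder of the argument is routine bookkeeping that the two-for-one slowdown is consistent with the $f^{(2t)}$ indexing.
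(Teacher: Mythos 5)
Your proposal is correct and follows essentially the same route as the paper's own argument: the identical two-ReLu chain $C^{(1)}=\sigma_{\mathrm{ReLu}}(J-C)$, $C^{(2)}=-\tfrac{2}{\delta}C^{(1)}+2J$, $C^{(3)}=\sigma_{\mathrm{ReLu}}(C^{(2)})$ followed by subtracting $J$, with the same choice of weights ($W_1=-\tfrac{2}{\delta}I$, zero neighbor weights, bias $2$) for the interleaved layer, the same finiteness argument for $\delta>0$, and the same absorption of the residual linear corrections into the adjacent layers to obtain $c_l^{(t)}\equiv f^{(2t)}$. The only cosmetic slip is the remark about the convention $\sgn(0)=-1$: the gadget actually outputs $+1$ on entries where $C_{ij}=1$, but since the construction of $C$ in \cref{lem:dist2lu} guarantees no entry equals $1$, this boundary case never occurs and the argument is unaffected.
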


\section{Proofs of Proposition 3 and Proposition 4}

\begin{proposition}[Proposition 3 in the main paper]
	Let $(G, l)$ be a labeled graph and let $k\geq 2$. Then for all $t \ge 0$, for all choices of initial colorings $f_k^{(0)}$ consistent with $l$ and for all weights $\mathbf{W}^{(t)}$,
	\begin{equation*}\label{refine}
		c^{(t)}_{\text{s},k,l} \sqsubseteq f^{(t)}_{k}\,. 
	\end{equation*}
\end{proposition}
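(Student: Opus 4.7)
The plan is to mirror the proof of Theorem 1, replacing nodes $v \in V(G)$ with $k$-sets $s \in [V(G)]^k$ and replacing the neighborhood $N(v)$ with $N(s) = N_L(s) \cup N_G(s)$. The entire argument goes through by induction on $t$, since structurally the set-based $k$-WL and the $k$-GNN operate on $[V(G)]^k$ in exactly the same message-passing style as the $1$-WL and $1$-GNN operate on $V(G)$.

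For the base case $t=0$, by assumption $f_k^{(0)}$ is consistent with $l$, meaning that whenever two $k$-sets $s, s'$ receive the same initial $k$-WL color (i.e., the same isomorphism type of $G[s]$ labeled by $l$), they also receive the same initial feature vector. Hence $c^{(0)}_{s,k,l}(s) = c^{(0)}_{s,k,l}(s')$ implies $f_k^{(0)}(s) = f_k^{(0)}(s')$.

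For the inductive step, assume $c^{(t)}_{s,k,l} \sqsubseteq f^{(t)}_k$ and take $s, s' \in [V(G)]^k$ with $c^{(t+1)}_{s,k,l}(s) = c^{(t+1)}_{s,k,l}(s')$. By the refinement step of the set-based $k$-WL, this yields both $c^{(t)}_{s,k,l}(s) = c^{(t)}_{s,k,l}(s')$ and the multiset equality $\oms c^{(t)}_{s,k,l}(u) \mid u \in N(s) \cms = \oms c^{(t)}_{s,k,l}(u) \mid u \in N(s') \cms$. Applying the induction hypothesis, the current features satisfy $f^{(t)}_k(s) = f^{(t)}_k(s')$, and moreover a bijection witnessing the multiset equality on colors transports via the hypothesis into the multiset equality $\oms f^{(t)}_k(u) \mid u \in N_L(s) \cup N_G(s) \cms = \oms f^{(t)}_k(u) \mid u \in N_L(s') \cup N_G(s') \cms$ of feature vectors.

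Since the $k$-GNN update rule expresses $f^{(t+1)}_k(s)$ through $f^{(t)}_k(s) \cdot W_1^{(t)}$ and a permutation-invariant aggregation (a sum) over the feature vectors indexed by $u \in N_L(s) \cup N_G(s)$, both of its inputs coincide for $s$ and $s'$, so $f^{(t+1)}_k(s) = f^{(t+1)}_k(s')$, completing the induction. The only subtle step is the lifting of multiset equality from colors to feature vectors, but this is immediate: a color-preserving bijection between the two neighborhoods becomes a feature-preserving bijection by the induction hypothesis, so no genuine difficulty arises beyond what already appears in the proof of Theorem 1.
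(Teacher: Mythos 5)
Your proof is correct and follows essentially the same route as the paper's: an induction on $t$ mirroring the proof of Theorem~1, with the base case handled by consistency of $f_k^{(0)}$ with the isomorphism types and the inductive step lifting the multiset equality of neighbor colors over $N(s)=N_L(s)\cup N_G(s)$ to a multiset equality of feature vectors, so that the permutation-invariant update produces identical outputs. Your write-up is in fact somewhat more explicit than the paper's brief sketch about the bijection transporting color equality to feature equality, but the underlying argument is identical.
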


\begin{proof}
	The proof follows the arguments in the proof of Theorem 1.
	We therefore only provide a brief proof by induction on the iteration $t$. 
	For the base case, i.e. iteration $t=0$, the statement holds because 
	the initial coloring $f_k^{(0)}$ is chosen to be consistent with the 
	isomorphism types $d^0_{k,l}$.
				
	For the inductive step, assume that the statement holds until iteration $t-1$.
	Consider two tuples $u$ and $v$ 
	which (i) are not distinguished in the first $(t-1)$ iterations 
	and (ii) are not distinguished in the $t^\text{th}$ iteration. 
	Therefore, $u$ and $v$ must have equal number of neighbors from 
	every color class. This implies that the $k$-GNN update rule yields the same output for $u$ and $v$.
	Hence, if two such tuples are distinguished by the $k$-GNN in the $t^\text{th}$ iteration,
	they must be distinguished by the $k$-WL as well. This finishes the induction and proves the proposition. 
\end{proof}

\begin{proposition}[Proposition 4 in the main paper]
	Let $(G, l)$ be a labeled graph and let $k\geq 2$. Then for all $t \geq 0$ there exists a sequence of weights $\mathbf{W}^{(t)}$ and a $k$-GNN architecture such that   
	\begin{equation*}
		c^{(t)}_{\text{s},k,l} \equiv f^{(t)}_{k}\,.
	\end{equation*}
\end{proposition}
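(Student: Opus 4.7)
The plan is to mirror the proof of Theorem 2, now working with $k$-sets and the set-based $k$-WL rather than nodes and $1$-WL. The structural parallel is strong: the $k$-GNN update rule has exactly the same form as the $1$-GNN rule (a linear combination of the current feature and an aggregation over neighbors, followed by a non-linearity), only with the neighborhood $N(s) = N_L(s) \cup N_G(s)$ over $k$-sets $s \in [V(G)]^k$ replacing the ordinary node neighborhood. Consequently, after fixing an ordering of $[V(G)]^k$ (of size $m = \binom{|V(G)|}{k}$), one can cast the $k$-GNN in matrix form using a $k$-set adjacency matrix $A_k \in \{0,1\}^{m \times m}$ defined by $(A_k)_{st} = 1$ iff $t \in N(s)$, so that one layer reads $F^{(t)}_k = \sigma\bigl(F^{(t-1)}_k W_1^{(t)} + A_k F^{(t-1)}_k W_2^{(t)} + bJ\bigr)$. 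In this formulation, the $k$-GNN becomes a $1$-GNN on the auxiliary graph with vertex set $[V(G)]^k$ and adjacency $A_k$, with initial labels encoding isomorphism types of induced subgraphs.

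First, I would introduce the analogue of the auxiliary variant used in the proof of Theorem 2: a refinement operator on $[V(G)]^k$ that keeps the initial isomorphism-type coloring and updates only by the multiset of previous-step colors of elements of $N(s)$. By the same argument used in Lemma 9 and the corresponding corollary—once a pair of $k$-sets is separated, remembering the atomic type forces them to stay separated in all later rounds—this variant is $\equiv$-equivalent to $c^{(t)}_{\text{s},k,l}$.

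Next I would run the induction on $t$. The base case uses a one-hot encoding of the finitely many isomorphism types of $G[s]$ for $s \in [V(G)]^k$; this initial matrix $F^{(0)}_k$ is linearly independent modulo equality and satisfies $F^{(0)}_k \equiv c^{(0)}_{\text{s},k,l}$. For the inductive step, assume $F^{(t)}_k \equiv c^{(t)}_{\text{s},k,l}$ and that $F^{(t)}_k$ is row-independent modulo equality. Applying Lemmas 7 and 8 verbatim with $A$ replaced by $A_k$ and the entry bound $n - 1$ replaced by $m - 1$ yields a weight matrix $W$ such that $\sgn(A_k F^{(t)}_k W - J)$ realizes precisely the refinement of $F^{(t)}_k$ induced by the multisets of neighbors along $A_k$, and is again row-independent modulo equality. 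To carry along the initial isomorphism-type information (so that two $k$-sets separated at an earlier stage remain separated), I would concatenate with $F^{(0)}_k$ in each layer, exactly as in the $\Lambda_{A,0,\vec W,\vec b}$ construction. Combined with the equivalence of the two $k$-WL variants established in the first step, this gives $F^{(t+1)}_k \equiv c^{(t+1)}_{\text{s},k,l}$.

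The main obstacle is essentially bookkeeping rather than new combinatorics: one must verify that Lemmas 7 and 8 do not use any property of the adjacency matrix $A$ beyond being a $\{0,1\}$-matrix whose row-sum counts neighbors, and that the corresponding bound on the number of distinct neighbor-count vectors is $m - 1$ rather than $n - 1$. Both observations are immediate, and the ReLu simulation described after Theorem 2 carries over without change. Hence no new conceptual idea is required; the whole argument is obtained by transporting the Theorem 2 proof through the reduction to a $1$-GNN on the auxiliary $k$-set graph.
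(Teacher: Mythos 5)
Your proposal is correct and follows essentially the same route as the paper: the paper likewise reduces to the $1$-dimensional case by forming the auxiliary graph $G^{\otimes k}$ on vertex set $[V(G)]^k$ with $s\sim t$ iff $|s\cap t|=k-1$ (your adjacency matrix $A_k$), initializing with isomorphism types, and then invoking Theorem 2 on that graph. The only difference is that the paper applies Theorem 2 as a black box while you re-run its lemmas with $A$ replaced by $A_k$ and $n-1$ by $m-1$, which amounts to the same argument.
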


\begin{proof}
	Let us simulate the set-based $k$-WL on a $n$-vertex graph $G$ 
	via a $1$-WL on a graph $G^{\otimes k}$ on $O(n^k)$ vertices,
	defined as follows. The vertex set of $G^{\otimes k}$ is the 
	set $[V(G)]^k$ of all $k$-element subsets of $V(G)$. 
	The edge set of $G^{\otimes k}$ is defined as follows: two 
	sets $s$ and $t$ are connected by an edge in $G^{\otimes k}$ 
	if and only if $|s \cap t| = k-1$. Observe that the 
	neighborhood of a vertex $s$ in this graph
	is exactly the set $N(s)$ defined earlier. 
	The initial labeling of the vertices of the graph $G^{\otimes k}$ is determined as follows:
	For $s\in V(G^{\otimes k})$ the initial label of $s$ is its isomorphism type.
				
	For the above construction, it immediately follows that 
	performing the $1$-WL on the graph $G^{\otimes k}$ yields the same 
	coloring, as the one obtained by performing $k$-WL for the graph $G$. It remains to define the sequence $(\boldsymbol{W}^{t})_{t>0}$ such that $k$-GNN simulates the set based $k$-WL on $G$. Applying Theorem 2 to the graph $G^{\otimes k}$ results in a 
	sequence $\widetilde{\boldsymbol{W}}^{t}$ such that the $1$-GNN can simulate $1$-WL on $G^{\otimes k}$ using $\widetilde{\boldsymbol{W}}^{t}$. 
	Hence, this sequence can be directly used in the $k$-GNN to simulate $k$-WL on $G$. 
\end{proof}


\begin{table*}
	\caption{%
	Details of the  \textsc{Qm9} dataset. $\dagger$: Including atomic energy/enthalpy.
	}%
	\label{fig:qm9_details}
	\renewcommand{\arraystretch}{1.0}
	\centering
	\begin{tabular}{lcc}
		\toprule
		\textbf{Property} & \textbf{Unit} & \textbf{Description} \\
		\midrule
		$\mu$                       & Debye   &     Dipole moment  \\
		$\alpha$                    &  Bohr$^3$   &    Isotropic polarizability \\
		$\varepsilon_{\text{HOMO}}$ & Hartree &     Energy of highest occupied molecular
		orbital (HOMO)  \\
		$\varepsilon_{\text{LUMO}}$ &  Hartree  &    Energy of lowest occupied molecular
		orbital (LUMO)     \\
		$\Delta\varepsilon_{\text{HOMO}}$         &  Hartree  &    Gap, difference between LUMO and \\
		$\langle R^2 \rangle$       &Bohr$^2$  &    Electronic spatial extent                \\
		\textsc{ZPVE}               &Hartree  &    Zero point vibrational energy    \\
		$U_0$    $^\dagger$                  &Hartree &     Internal energy at 0 K  \\
		$U$      $^\dagger$                   &Hartree   &   Internal energy at 298.15 K  \\
		$H$      $^\dagger$                   &Hartree     & Enthalpy at 298.15 K  \\
		$G$      $^\dagger$                   &Hartree     & Free energy at 298.15 K
		 \\
		$C_{\text{v}}$              & cal/(mol K)&  Heat capacity at 298.15 K   \\
		\bottomrule
	\end{tabular}
\end{table*}

\end{document}